\documentclass{article} %
\usepackage{iclr2023_conference_arxiv,times}

\usepackage{hyperref}
\usepackage{url}

\iclrfinalcopy

\usepackage[utf8]{inputenc} %
\usepackage[T1]{fontenc}    %
\usepackage{booktabs}       %
\usepackage{placeins}

\usepackage{microtype}      %

\usepackage{wrapfig}

\usepackage{amsfonts,latexsym,amsthm,amssymb,amsmath,amscd,euscript}

\usepackage{bbm}
\usepackage{framed}
\usepackage{mathrsfs}

\usepackage{mathtools}
\usepackage{stmaryrd}

\usepackage[noadjust]{cite}

\usepackage{algorithm}
\makeatletter
\renewcommand{\ALG@name}{Algorithm}
\makeatother
\usepackage{algpseudocode}
\algblockdefx{ForAllP}{EndFaP}[1]%
{\textbf{for all }#1 \textbf{do in parallel}}%
{\textbf{end for}}
\algnewcommand\algorithmicforeach{\textbf{for each}}
\algdef{S}[FOR]{ForEach}[1]{\algorithmicforeach\ #1\ \algorithmicdo}

\usepackage{accents}
\usepackage{setspace}
\hypersetup{colorlinks=true,citecolor=blue,urlcolor=black,linkbordercolor={1 0 0}}
	\definecolor{atg}{rgb}{1.0, 0.6, 0.4}
	\definecolor{plum}  {rgb}{.4,0,.4}
	\definecolor{bred} {rgb}{0.6,0,0}
	\definecolor{lnkcolr}{rgb}{0.55, 0.3, 0.09} 
	\definecolor{urlcolr}{rgb}{0.0, 0.35, 0.26}

	\usepackage{tikz-cd}
	\usepackage{tikz,pgfplots,pgfplotstable}
	\usetikzlibrary{matrix}
	\usetikzlibrary{cd}
	\usetikzlibrary{calc}
	\usetikzlibrary{arrows}      
	\usetikzlibrary{decorations.markings}
	\usetikzlibrary{positioning}
	\pgfplotsset{width=7cm,compat=1.8}  
	
	\usepackage{float}
	\usepackage{bm}
	\usepackage{nicefrac}
	\usepackage{enumitem}
	\usepackage{verbatim}
	\usepackage[bottom]{footmisc}
	\usepackage{tcolorbox}
	\usepackage[normalem]{ulem}
	\usepackage{nicematrix}
	\usepackage{adjustbox}
	\usepackage{changepage}
	\usepackage{multirow}
	\usepackage{wrapfig}
	
	\makeatletter
	\newtheorem*{rep@theorem}{\rep@title}
	\newcommand{\newreptheorem}[2]{%
		\newenvironment{rep#1}[1]{%
			\def\rep@title{#2 \ref{##1}}%
			\begin{rep@theorem}}%
			{\end{rep@theorem}}}
	\makeatother

	\theoremstyle{plain}
	\newtheorem{theorem}{Theorem}%

	\newtheorem{proposition}[theorem]{Proposition}

	\theoremstyle{definition}
	\newtheorem{definition}[theorem]{Definition}

	\theoremstyle{definition}

	\newtheorem*{tldr*}{TL;DR}
	\newtheorem*{impressions*}{Impressions}

	\newcommand{\ex}[2]{{\ifx&#1& \mathbb{E} \else \underset{#1}{\mathbb{E}} \fi \left[#2\right]}}
	\newcommand{\var}[2]{{\ifx&#1& \mathsf{Var} \else \underset{#1}{\mathsf{Var}} \fi \left[#2\right]}}

	\DeclareMathOperator{\tr}{Tr}

	\newcommand{\Rb}{\mathbb{R}}
	
	\usepackage{bbm}
	\def\ind{\mathbbm{1}}
	\def\1{\mathbf{1}}

	\DeclareMathOperator*{\argmin}{arg\,min}
	\DeclareMathOperator*{\argmax}{arg\,max}

	\let\originalleft\left
	\let\originalright\right
	\renewcommand{\left}{\mathopen{}\mathclose\bgroup\originalleft}
	\renewcommand{\right}{\aftergroup\egroup\originalright}
	
	\newcommand{\mybignote}[2]{}%

\definecolor{cblue}{rgb}{0.0, 0.75, 1.0}\definecolor{bblue}{rgb}{0.0, 0.18, 0.39}
\definecolor{bluee}{rgb}{0.33, 0.41, 0.92}
\definecolor{redd}{rgb}{0.99, 0.4, 0.37}
\definecolor{blblue}{rgb}{0.204, 0.482, 0.678}
\definecolor{darkgray}{rgb}{0.66, 0.66, 0.66}

\newcommand{\bo}[1]{\textcolor{red}{\bf #1}} %

\newcommand{\mc}[1]{\mathcal{#1}}

\newcommand{\CMI}[2]{{\ifx&#2& \mathsf{CMI} \else \mathsf{CMI}_{#2} \fi \left(#1\right)}}

\usepackage{etoolbox}
\newcommand{\define}[4]{\expandafter#1\csname#3#4\endcsname{#2{#4}}}
\forcsvlist{\define{\newcommand}{\mathcal}{c}}{K,C,F,H,S,X,Y,Z,E,D,N,L,G,P,Q,A,T,I,M,W,B,U,V,O,R}  %

\usepackage{thmtools}

\graphicspath{{./final/}}

\title{FoSR: First-order Spectral Rewiring for\\ addressing Oversquashing in GNNs}

\author{Kedar Karhadkar\\
UCLA\\
\texttt{kedar@math.ucla.edu} \\
\And 
Pradeep Kr. Banerjee\\
MPI MiS\\
\texttt{pradeep@mis.mpg.de}\\
\And 
Guido Mont{\'u}far\\
UCLA \& MPI MiS\\
\texttt{montufar@math.ucla.edu}
}

\begin{document}

\maketitle

\begin{abstract} 
Graph neural networks (GNNs) are able to leverage the structure of graph data by passing messages along the edges of the graph. While this allows GNNs to learn features depending on the graph structure, for certain graph topologies it leads to inefficient information propagation and a problem known as oversquashing. This has recently been linked with the curvature and spectral gap of the graph. On the other hand, adding edges to the message-passing graph can lead to increasingly similar node representations and a problem known as oversmoothing. We propose a computationally efficient algorithm that prevents oversquashing by systematically adding edges to the graph based on spectral expansion. We combine this with a relational architecture, which lets the GNN preserve the original graph structure and provably prevents oversmoothing. We find experimentally that our algorithm outperforms existing graph rewiring methods in several graph classification tasks.
\end{abstract}

\section{Introduction}

Graph neural networks (GNNs) \citep{gori2005new, scarselli2008graph} are a broad class of models which process graph-structured data by passing messages between nodes of the graph. Due to the versatility of graphs, GNNs have been applied to a variety of domains, such as chemistry, social networks, knowledge graphs, and recommendation systems \citep{zhou2020graph,wu2020comprehensive}. 
GNNs broadly follow a message-passing framework, meaning that each layer of the GNN aggregates the representations of a node and its neighbors, and transforms these features into a new representation for that node. 
The aggregation function used by the GNN layer is taken to be locally permutation-invariant, since the ordering of the neighbors of a node is arbitrary, and %
its specific form is a key component of the GNN architecture; varying it gives rise to several common GNN variants \citep{kipfwelling,velivckovic2017graph,li2015gated,hamilton2017inductive,xu2018powerful}. 
The output of a GNN can be used for tasks such as graph classification or node classification. 
Although GNNs are successful in computing dependencies between nodes of a graph, they have been found to suffer from a limited capacity to capture {long-range interactions}. 
For a fixed graph, this is caused by a variety of problems depending on the number of layers in the GNN. Since graph convolutions are local operations, a GNN with a small number of layers can only provide a node with information from nodes close to itself. For a GNN with $l$ layers, the receptive field of a node (the set of nodes it receives messages from) is exactly the ball of radius $l$ about the node. For small values of $l$, this results in ``{underreaching}'', and directly limits which functions the GNN can represent. On a related note, the functions representable by GNNs with $l$ layers are limited to those computable by $l$ steps of the Weisfeiler-Lehman (WL) graph isomorphism test \citep{morris2019weisfeiler,
xu2018powerful,
barcelo2020logical}. 
On the other hand, increasing %
the number of layers 
leads to its own set of problems. In contrast to other architectures that benefit from the expressivity of deeper networks, GNNs experience a decrease in accuracy as the number of layers increases \citep{li2018deeper,
chen2020measuring}. 
This phenomenon has partly been attributed to ``oversmoothing'', where repeated graph convolutions eventually render node features indistinguishable \citep{li2018deeper, Oono2020Graph,cai2020note,Zhao2020PairNorm,Rong2020DropEdge,di2022graph}. 
Separate from oversmoothing is the problem of ``{oversquashing}'' first pointed out by \citet{alon2021on}. 
As the number of layers of a GNN increases, information from (potentially) exponentially-growing receptive fields need to be concurrently propagated at each message-passing step. This leads to a bottleneck that causes oversquashing, when an exponential amount of information is squashed into fixed-size node vectors \citep{alon2021on}. Consequently, for prediction tasks relying on long-range interactions, the GNN can fail. %
Oversquashing usually occurs when there are enough layers in the GNN to reach any node (the receptive fields are large enough), but few enough that the GNN cannot process all of the necessary relations between nodes. Hence, for a fixed graph, the problems of underreaching, oversquashing, and oversmoothing occur in three different regimes, depending on the number of layers of the GNN.  
\begin{wrapfigure}{R}{0.46\textwidth}
	\centering
	\includegraphics[width=0.46\textwidth]{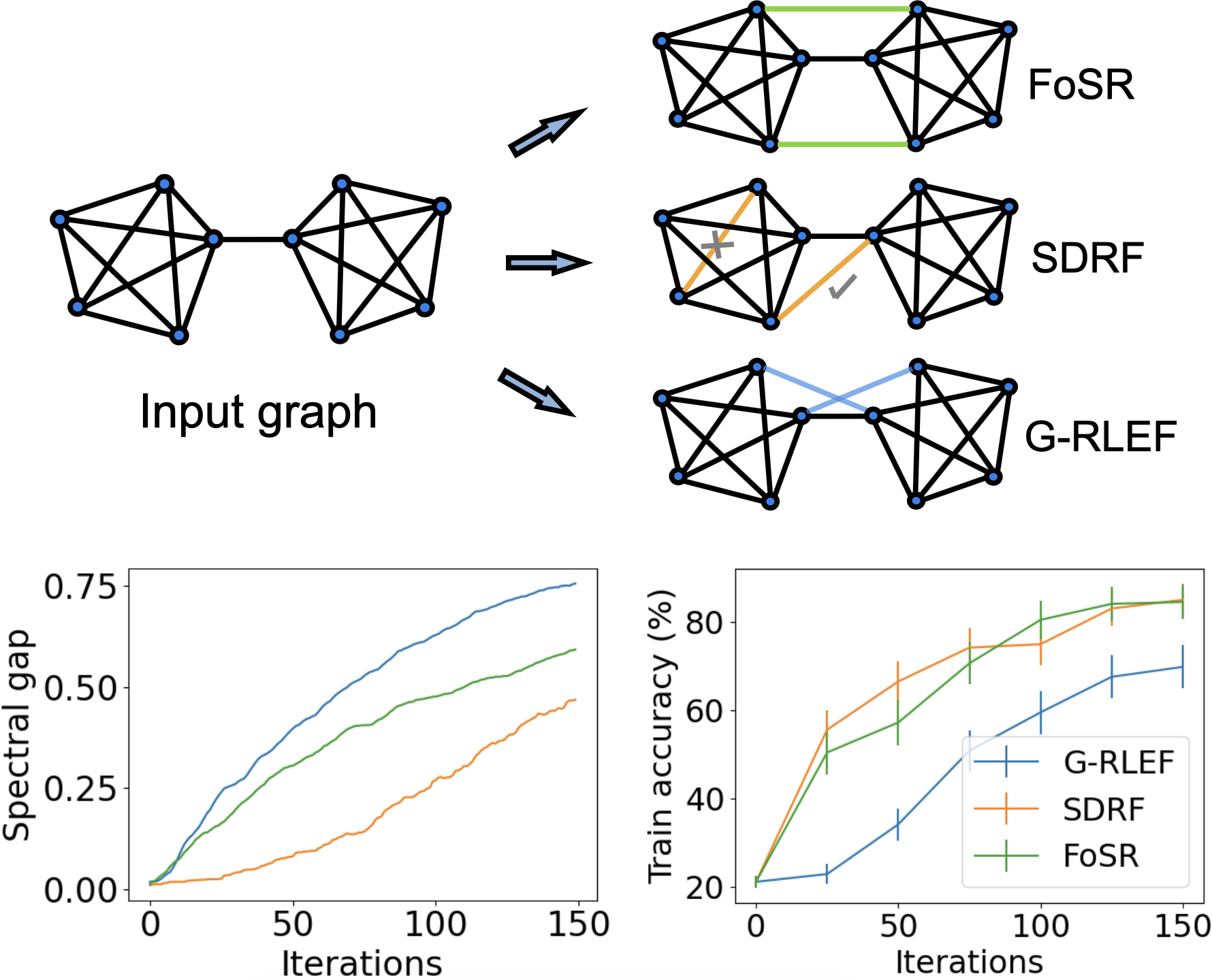}
	\caption{\label{fig:spectralexpansion}Top: Schematic showing different rewiring methods, FoSR (ours), SDRF \citep{topping2022understanding}, and G-RLEF \citep{banerjee2022oversquashing} for alleviating structural bottlenecks in the input graph.
	Our method adds new edges that are	labeled differently from the existing ones so that the GNN can distinguish them in training. 
	Bottom: Normalized spectral gap and training accuracy as functions of the number of rewiring iterations for a learning task modeled on the \textsc{NeighborsMatch} problem for a path-of-cliques input (for details, see Appendix~\ref{app:Fig1}).}
\end{wrapfigure}
A common approach to addressing oversquashing is to \emph{rewire} the input graph, making changes to its edges so that it has fewer {structural bottlenecks}. 
A simple approach to rewiring is to %
make the last layer of the GNN fully adjacent, allowing all nodes to interact with one another \citep{alon2021on}. 
Alternatively, one can make changes to edges of the input graph, feeding the modified graph into all layers of the GNN \citep{topping2022understanding,banerjee2022oversquashing}. 
The latter approaches can be viewed as optimizing the spectral gap of the input graph for alleviating structural bottlenecks and improving the overall quality of signal propagation across nodes (see Figure~\ref{fig:spectralexpansion}).

While these rewiring methods improve the connectivity of the graph, there are drawbacks to making too many modifications to the input. The most obvious problem is that we are losing out on topological information about the original graph. If the structure of the original graph is indeed relevant, adding and removing edges diminishes that benefit to the task. 
Another issue arises from the smoothing effects of adding edges: If we add too many edges to the input graph, an ordinary GCN will suffer from oversmoothing \citep{li2018deeper}. 
In other words, if we use this natural approach to rewiring, we experience a \emph{trade-off between oversquashing and oversmoothing}.
{This observation, which does not seem to have been pointed out in earlier works, is the main motivation for the approach that we develop in this work.}

\subsection{Main contributions} 
This paper presents a new framework for rewiring a graph to reduce oversquashing in GNNs while preventing oversmoothing. Here are our main contributions:
\begin{itemize}[leftmargin=*]
    \item We introduce a framework for graph rewiring which can be used with any rewiring method that sequentially adds edges. 
    In contrast to previous approaches that only modify the input graph \citep[e.g.,][]{topping2022understanding,banerjee2022oversquashing,bober2022rewiring}, 
    our solution gives special labels to the added edges. We then use a relational GNN on this new graph, with the relations corresponding to whether the edge was originally in the input graph or %
    added during the rewiring. This allows us to preserve the input graph topology while using the new edges to improve its connectivity. 
    In Theorem~\ref{r-gcn-smoothing} we show that this approach also prevents oversmoothing. 
    
    \item We introduce a new rewiring method, FoSR (First-order Spectral Rewiring) aimed at optimizing the spectral gap of the graph input to the GNN (Algorithm~\ref{alg:FoSR}). This algorithm computes the first-order change in the spectral gap from adding each edge, and then adds the edge which maximizes this (Theorem \ref{first-order-lambda} and Proposition~\ref{proposition:addededge}). 

    \item We empirically demonstrate that the proposed method results in faster spectral expansion (a marker of reduced oversquashing) and improved test accuracy against several baselines on several graph classification tasks %
    (see Table~\ref{tab:expresults}). 
    Experiments demonstrate that the relational structure preserving the original input graph significantly boosts test accuracy. 
\end{itemize}

\subsection{Related works}
Past approaches to reducing oversquashing have hinged upon choosing a measure of oversquashing, and modifying the edges of the graph to minimize it. 
\citet{topping2022understanding} argue that negatively curved edges are responsible for oversquashing drawing on curvature notions from \citet{forman2003bochner} and \citet{ollivier2009ricci}.
They introduce a rewiring method known as \emph{stochastic discrete Ricci Flow} (SDRF), which aims to increase the balanced Forman curvature of negatively curved edges by %
adding new edges. 
\citet{bober2022rewiring} extend this line of investigation by %
considering the same type of rewiring but using different notions of discrete curvature.  
\citet{banerjee2022oversquashing} approach oversquashing from an information-theoeretic viewpoint, measuring it in terms of the spectral gap of the graph and demonstrate empirically that 
this can increase %
accuracy for certain graph classification tasks.
They propose a rewiring algorithm \emph{greedy random local edge flip} (G-RLEF) motivated by an expander graph construction employing an effective resistance \citep{lyonsPeresprobabilityTreesNwsBook} based edge sampling strategy.
The work of \citet{alon2021on} first pointing at oversquashing also introduced an approach to rewiring, where they made the last GNN layer 
an expander – the complete graph that allows every pair of nodes to connect to each other. 
They also experimented with making the last layer partially adjacent (randomly including any potential edge). This can be thought of as a form of spectral expansion in the final layer since random graphs have high spectral gap \citep{friedman1991second}. 
In contrast to these works, our method gives a practical way of achieving the largest possible increase in the spectral graph with the smallest possible modification of the input graph and in fact preserving the input graph topology via a relational structure. 

Although not as closely related, we find it worthwhile also pointing at following works in this general context. 
Prior to the diagnosis of the oversquashing problem, \citet{klicpera2019diffusion} used graph diffusion to rewire the input graph, improving long-range connectivity for the GNN. 
Rewiring can also be performed while training a GNN. \citet{arnaiz2022diffwire} use first-order spectral methods to define a loss function depending on the adjacency matrix, allowing a GNN to learn a rewiring that alleviates oversquashing. 
We should mention that aside from rewiring the input graph, some works pursue different approaches to solve oversquashing, such as 
creating positional embeddings for the nodes or edges inspired by the transformer architecture \citep{vaswani2017attention}. %
The most direct generalization of this approach to graphs is using Laplacian embeddings \citep{kreuzer2021rethinking,dwivedi2020generalization}. 
\citet{bruel2022rewiring} combine this with adding %
neighbors to encode the edges which are the result of multiple hops.

\section{Preliminaries} 

\subsection{Background on spectral graph theory}\label{sec:prelim}

Let $\mathcal{G} = (\mathcal{V}, \mathcal{E}, \mathcal{R})$ be an undirected graph with node set $\mc{V}$, $|\cV|=n$, edge set $\mc{E}$, $|\cE|=m$, and relation set $\mc{R}$. 
The set $\mc{R}$ is a finite set of relation types, and elements $(u, v, r) \in \mc{E}$ consist of a pair of nodes $u, v \in \mc{V}$ together with an associated relation type $r \in \mc{R}$. When the relation type of an edge is not relevant, we will simply write $(u, v)$ %
for an edge. 
For each $v \in \mathcal{V}$ we define $\mc{N}(v)$ to consist of all neighbors of $v$, that is all $u \in \mc{V}$ such that there exists an edge $(u, v) \in \mc{E}$. For each $r \in \mc{R}$ and $v \in \mc{V}$, we define $\mc{N}_r(v)$ to consist of all neighbors of $v$ of relation type $r$. The degree $d_v$ of a node $v\in\cV$ is the number of neighbors of $v$. We define the adjacency matrix $A = A(\mc{G})$ by $A_{ij} = 1$ if $(i, j) \in \mc{E}$, and $A_{ij} = 0$ otherwise. Let $D = D(\mc{G})$ denote the diagonal matrix of degrees given by $D_{ii} = d_i$. The normalized Laplacian $L = L(\mc{G})$ is defined as $L = I - D^{-1/2}AD^{-1/2}$. %
We will often add self-loops (edges $(i, i)$ for $i \in \mc{V}$) to the graphs we consider, so we define augmented versions of the above matrices corresponding to graphs with self-loops added. 
If $\mc{G}$ is a graph without self-loops, we define its augmented adjacency matrix $\tilde{A} := I + A$, its augmented degree matrix $\tilde{D} = I + D$, and its augmented Laplacian $\tilde{L} = I - \tilde{D}^{-1/2}\tilde{A}\tilde{D}^{-1/2}$. 

We denote the eigenvalues of the normalized Laplacian $L$ by $0=\lambda_1 \le \lambda_2 \le \cdots \le \lambda_n \le 2$. 
Let $\bm{1}$ denote the constant function which assumes the value 1 on each node. Then $D^{1/2}\bm{1}$ is an eigenfunction of $L$ with eigenvalue $0$. 
The \emph{spectral gap} of $\mc{G}$ is $\lambda_2 - \lambda_1 = \lambda_2$. 
We say that $\cG$ has good spectral expansion if it has a large spectral gap. In Appendix~\ref{sec:Cheeger}, we review the relation between the spectral gap and a
related measure of graph expansion, the \emph{Cheeger constant}.

\subsection{Background on Relational GNNs} 
The framework we propose fundamentally relies on relational GNNs (R-GNNs) \citep{battaglia2018relational}, so we review their formulation here. 
We define a general R-GNN layer by %
\[h^{(k+1)}_v = \phi_k\left(h_v^{(k)}, \textstyle\sum_{r \in \mc{R}}\textstyle\sum_{u \in \mathcal{N}_r(v)}\psi_{k ,r}(h_u^{(k)}, h_v^{(k)}) \right),\]
where the $\psi_{k, r}: \Rb^{d_k} \times \Rb^{d_k} \to \Rb^{d_{k+1}}$ are \emph{message-passing functions}, and the $\phi_k: \Rb^{d_{k+1}} \to \Rb^{d_{k+1}}$ are \emph{update functions.} The $\phi_k$ and $\psi_{k, r}$ can either be fixed mappings or learned during training. All GNN layer types we consider will be special cases of this general R-GNN layer. We recover a classical (non-relational) GNN when there is only one relation, $|\mathcal{R}|=1$.

As a special case of R-GNNs, R-GCN layers are defined by the update
\[h^{(k+1)}_v = \sigma\left(W^{(k)} h_v^{(k)} + \textstyle\sum_{r \in \mc{R}}\textstyle\sum_{u \in \mathcal{N}_r(v)}\frac{1}{c_{u, v}}W_r^{(k)}h_u^{(k)} \right), \]
where $\sigma$ is a nonlinear activation, $c_{u, v}$ is a normalization factor, and $W^{(k)}$, $W_r^{(k)}$ are relation-specific learned linear maps.
One can interpret the $Wh_{v}^{(k)}$ term as adding self-loops to the original graph, and encoding these self-loops as their own relation. We often take $\sigma = \text{ReLU}$, and $c_{u, v} = \sqrt{(1 + d_u)(1 + d_v)}$.

Other specializations of R-GNNs include graph convolutional networks (GCNs) \citep{kipfwelling}, defined by
\[
h_v^{(k+1)} = \sigma\left(\textstyle\sum_{u \in \mc{N}(v) \cup \{v\}} \frac{1}{c_{u, v}} W^{(k)}h_u^{(k)}\right),
\]
and graph isomorphism networks (GINs) \citep{xu2018powerful}, defined by
\[h_v^{(k+1)} = \text{MLP}^{(k)}\left(\textstyle\sum_{u \in \mc{N}(v) \cup \{v\}} h_u^{(k)} \right). \]

\vspace{-.5cm}
\section{Relational rewiring of GNNs} 
\label{sec:relational-rewiring}

We introduce a graph rewiring framework to improve connectivity while retaining the original graph via a relational structure, which demonstrably allows us to also control the \emph{rate of smoothing}. The rate of smoothing measures the similarity of neighboring node features in the GNN output graph. Adding separate weights for the rewired edges allows the network more flexibility in learning an appropriate rate of smoothing. Our main result in this section, Theorem~\ref{r-gcn-smoothing} makes this precise.

\subsection{Relational rewiring} 
We incorporate relations into our architecture in the following way. Suppose that we rewire $\mathcal{G} = (\mathcal{V}, \mathcal{E}_1)$ by adding edges, yielding a rewired graph $\mathcal{G}' = (\mathcal{V}, \mathcal{E}_1 \cup \mathcal{E}_2)$. We equip $\mathcal{G}'$ with a relational structure by assigning each edge in $\mathcal{E}_1$ the edge type 1, and each edge in $\mathcal{E}_2$ the edge type 2. For example, in an R-GCN, this relational structure would result in the following layer form:
\[
h_v^{(k+1)} = \sigma \left(W^{(k)}h_v^{(k)} + \textstyle\sum_{(u, v) \in \mc{E}_1} \tfrac{1}{c_{u, v}}W_1^{(k)}h_{u}^{(k)} + \textstyle\sum_{(u, v) \in \mc{E}_2} \tfrac{1}{c_{u, v}} W_2^{(k)}h_u^{(k)}\right). 
\]
The original layer (before relational rewiring) would include only the first two terms. 
A rewired graph with no relational structure would add the third term but use the same weights $W^{(k)}_2 = W^{(k)}_1$.

A relational structure serves to provide the GNN with more flexibility, since it remembers both the original structure of the graph and the rewired structure. In the worst case scenario, if we add too many edges, the GNN may counterbalance this by setting most of the weights $W_2^{(k)}$ to be 0, focusing its attention on the original graph. In a more realistic scenario, the GNN can use the original edge weights $W_1^{(k)}$ to compute important structural features of the graph, and the weights $W_2^{(k)}$ simply to transmit information along the graph. Since the weights assigned to the original and rewired edges are separate, their purposes can be served in parallel without sacrificing the topology of the graph. Finally, this R-GCN model allows us to better regulate the rate of smoothing as we demonstrate next. 

\subsection{Rate of smoothing for R-GCN with rewiring} 
\label{sec:rate-smoothing}

In this section, we analyze the smoothing effects of repeatedly applying R-GCN layers with rewiring. Here, we consider
R-GCN layers $\varphi: \Rb^{n \times p} \to \Rb^{n \times p}$ of the form
\begin{align}
    \varphi(X) = X\Theta + \textstyle\sum_{r \in \mathcal{R}} D^{-1/2}A_r D^{-1/2}X\Theta_r,
    \label{eq:varphi}
\end{align}
where $\Theta, \Theta_r \in \mathbb{R}^{p \times p}$ are weight matrices, $A_r$ is the adjacency matrix of the $r$-th relation, and $D$ is the matrix of degrees of $\mc{G}$. 
For a vanilla GCN without relations or residual connections, one problem with rewiring the graph is that adding too many edges will result in oversmoothing \citep{di2022graph}. Indeed, adding edges to a graph will by definition increase its Cheeger constant.
For input graphs with a high spectral gap (and hence high Cheeger constant), a GCN will produce similar representations for each node \citep{xu2018representation,Oono2020Graph}.
As an extreme case, if we replace the graph with a complete graph, the GCN layer will assign each node the same representation. 
We will show that R-GCNs are robust to this effect, allowing us to add edges without the adverse side effect of oversmoothing. %

\begin{definition}

Let $\mc{G}$ be a connected graph with adjacency matrix $A$ and normalized Laplacian $L$.
For $i \in \mc{V}$, let $d_i$ denote the degree of node $i$. Given a scalar field $f \in \Rb^n$, its \emph{Dirichlet energy} with respect to $\mc{G}$ is defined as
\begin{align*}\mathscr{E}(f) &:= \tfrac{1}{2}\textstyle\sum_{i, j}A_{i,j}\left(\frac{f_i}{\sqrt{d_i}} - \frac{f_j}{\sqrt{d_j}}\right)^2 = f^T L f.
\end{align*}
For a vector field $X \in \Rb^{n \times p}$, we define
\begin{align*}\mathscr{E}(X) &:= \tfrac{1}{2}\textstyle\sum_{i, j, k}A_{i,j}\left(\frac{X_{i,k}}{\sqrt{d_i}} - \frac{X_{j,k}}{\sqrt{d_j}}\right)^2 = \tr(X^TL X).
\end{align*}
\end{definition}
The Dirichlet energy of a unit-norm function on the nodes of a graph is a measure of how ``non-smooth'' it is \citep{chungbook}. The Dirichlet energy of a function $f$ is small when $f_i/\sqrt{d_i}$ is close in value to $f_j/\sqrt{d_j}$ for all edges $(i, j)$. In the case where $\mc{G}$ is $d$-regular (all nodes have degree $d$), this reduces to the familiar notion of adjacent nodes having representations close in value. 
Hence, we make the following definition:
\begin{definition}
    Let $\mathcal{G}$ be a graph and $\varphi: \Rb^{n \times p} \to \Rb^{n \times p}$ be a mapping. 
    We define the \emph{rate of smoothing} of $\varphi$ with respect to $\mc{G}$ as
    \[RS_{\mc{G}}(\varphi) := 1 - \left(\frac{\sup_{X: \mathscr{E}(X) \neq 0} \mathscr{E}(\varphi(X))/\mathscr{E}(X)}{\sup_{X: X \neq 0} \|\varphi(X)\|_F^2/\|X\|_F^2}\right)^{1/2}.\]
\end{definition}
The numerator of the fraction above indicates the rate of decay (or expansion) of the Dirichlet energy upon applying $\varphi$. We take a supremum over $X$ to find the largest possible relative change in energy upon applying $\varphi$. We would also like our notion of the rate of smoothing to be scale-invariant; multiplying $\varphi$ by a scalar should not change its rate of smoothing. To impose scale-invariance, we divide by a factor in the denominator which captures how much $\varphi$ scales up the entries of $X$.  %
By defining the the rate of smoothing as a ratio of two norms, we can estimate them separately which gives us a good theoretical handle of smoothing.
Note that if $\varphi$ is linear,
\[RS_{\mc{G}}(\varphi) := 1 - \left(\frac{\sup_{\mathscr{E}(X) = 1} \mathscr{E}(\varphi(X))}{\sup_{\|X\|_F = 1} \|\varphi(X)\|_F^2}\right)^{1/2},\]
since $\|\cdot\|_F^2$ and $\mathscr{E}$ are quadratic forms. 
The following theorem shows that R-GCN layers are flexible in their ability to choose an appropriate rate of smoothing for a graph:

\begin{theorem}\label{r-gcn-smoothing}
Let $\mc{G}_1 = (\mc{V}, \mc{E}_1)$ be a graph and $\mc{G}_2 = (\mc{V}, \mc{E}_1 \cup \mc{E}_2)$ be a rewiring of $\mc{G}_1$. Consider an R-GCN layer $\varphi$ defined as in \eqref{eq:varphi}, with relations $r_1 = \mc{E}_1$, $r_2 = \mc{E}_2$. Then for any $\lambda \in [0, \lambda_2(L(\mc{G}_2))]$, there exist values of $\Theta, \Theta_1, \Theta_2$ for which $\varphi$ smooths with rate $RS_{\mc{G}_2}(\varphi) = \lambda$ with respect to $\mathcal{G}_2$. 
\end{theorem}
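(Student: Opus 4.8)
The plan is to use that $\varphi$ is linear in $X$, so the bracketed quantity in $RS$ is a ratio of two (quadratic) optimization problems, and to make both of them transparent by collapsing the weight matrices onto the spectrum of $L:=L(\mathcal{G}_2)$. Since $A_{r_1}+A_{r_2}=A(\mathcal{G}_2)$ and $D$ is the degree matrix of $\mathcal{G}_2$, the choice $\Theta=(1-\gamma)I$, $\Theta_1=\Theta_2=\gamma I$ for a scalar $\gamma$ reduces \eqref{eq:varphi} to
\[
\varphi(X)=(1-\gamma)X+\gamma\,D^{-1/2}A(\mathcal{G}_2)D^{-1/2}X=(I-\gamma L)X ,
\]
i.e. $\varphi$ is left-multiplication by the symmetric matrix $M_\gamma:=I-\gamma L$, which is diagonalized in the same orthonormal eigenbasis $u_1,\dots,u_n$ of $L$ as $L$ itself, with eigenvalues $1-\gamma\lambda_i$. (For $\lambda=0$ one can instead simply take $\varphi=\mathrm{Id}$, which has $RS=0$.)

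Next I would evaluate the two suprema in this eigenbasis. Expanding the columns of $X$ in $\{u_i\}$, both $\|\varphi(X)\|_F^2$ and $\mathscr{E}(\varphi(X))=\tr(X^\top M_\gamma L M_\gamma X)=\tr(X^\top L M_\gamma^2 X)$ decouple across modes; crucially, because $\lambda_1=0$ (with eigenvector $D^{1/2}\mathbf{1}$), the $i=1$ mode contributes to $\|X\|_F^2$ but not to $\mathscr{E}(X)$. Hence
\[
\sup_{\|X\|_F=1}\|\varphi(X)\|_F^2=\max_{1\le i\le n}(1-\gamma\lambda_i)^2,\qquad
\sup_{\mathscr{E}(X)\neq 0}\frac{\mathscr{E}(\varphi(X))}{\mathscr{E}(X)}=\max_{2\le i\le n}(1-\gamma\lambda_i)^2 ,
\]
and, using the linear-case formula for the rate of smoothing,
\[
RS_{\mathcal{G}_2}(\varphi)=1-\frac{\max_{2\le i\le n}|1-\gamma\lambda_i|}{\max_{1\le i\le n}|1-\gamma\lambda_i|}.
\]

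Then I would tune $\gamma$ to realize a prescribed $\lambda\in(0,\lambda_2]$. For $\gamma\in[0,1]$ each factor $1-\gamma\lambda_i$ lies in $[-1,1]$ (since $\lambda_i\le 2$), so the denominator equals $1$, attained at $i=1$. When $\gamma$ is small the numerator is governed by the slowest-decaying nontrivial mode, $\max_{i\ge 2}|1-\gamma\lambda_i|=1-\gamma\lambda_2$, giving $RS_{\mathcal{G}_2}(\varphi)=\gamma\lambda_2$; choosing $\gamma=\lambda/\lambda_2$ then yields $RS_{\mathcal{G}_2}(\varphi)=\lambda$, and continuity of $\gamma\mapsto RS_{\mathcal{G}_2}(\varphi)$ together with the intermediate value theorem fills in the rest of $[0,\lambda_2]$.

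The hard part is exactly this last step: one must guarantee that the numerator is controlled by the $\lambda_2$-mode rather than by a large eigenvalue $\lambda_n$ near $2$, since $|1-\gamma\lambda_n|$ can overtake $1-\gamma\lambda_2$ once $\gamma$ is not small; more precisely $\max_{i\ge2}|1-\gamma\lambda_i|=1-\gamma\lambda_2$ only for $\gamma\le 2/(\lambda_2+\lambda_n)$, so the argument as sketched covers $RS_{\mathcal{G}_2}(\varphi)$ up to $2\lambda_2/(\lambda_2+\lambda_n)$, which equals the full interval $[0,\lambda_2]$ precisely in the regime $\lambda_n\le 2-\lambda_2$ (automatic, for instance, when the normalized adjacency of $\mathcal{G}_2$ is positive semidefinite, as for sufficiently lazy walks / enough self-loops). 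To handle the general case I would keep the extra degree of freedom, taking $\Theta=\alpha I$ and $\Theta_1=\Theta_2=\beta I$ so that $\varphi(X)=(\alpha I-\beta L)X$ and $\mu_i=\alpha-\beta\lambda_i$, and then optimize the pair $(\alpha,\beta)$ (using scale-invariance of $RS$) so as to balance the $\lambda_2$ and $\lambda_n$ contributions against $|\mu_1|=|\alpha|$; in every case the residual work is elementary one-variable estimates on $\max_i|1-\gamma\lambda_i|$.
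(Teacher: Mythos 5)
Your main computation is essentially the paper's own proof: the same choice $\Theta=(1-\gamma)I$, $\Theta_1=\Theta_2=\gamma I$ collapsing the layer to $X\mapsto (I-\gamma L)X$, the same evaluation of the two suprema in the eigenbasis of $L$, and the same conclusion $RS_{\mc{G}_2}(\varphi)=\gamma\lambda_2$ for small $\gamma$. Where you are more careful than the paper is precisely the point you flag: the numerator supremum is $\max_{i\ge 2}(1-\gamma\lambda_i)^2$, which equals $(1-\gamma\lambda_2)^2$ only when $\gamma\le 2/(\lambda_2+\lambda_n)$, whereas the paper's inequality (a) asserts the bound $(1-\alpha\lambda_2)^2$ for every $\alpha\in[0,1]$, and its final step also takes $\sqrt{(1-\alpha\lambda_2)^2}=1-\alpha\lambda_2$, which needs $\alpha\lambda_2\le 1$. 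So the restriction you identify is genuine, and the argument (yours or the paper's) really establishes the claim only for $\lambda\in[0,\,2\lambda_2/(\lambda_2+\lambda_n)]$, which is all of $[0,\lambda_2]$ exactly when $\lambda_2+\lambda_n\le 2$.

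However, your proposed completion does not close this gap. Since $RS$ is scale-invariant and the second scalar only rescales $\varphi$, the two-parameter family realizes exactly the same set of rates as the one-parameter one: normalizing the constant coefficient to $1$, one gets $RS=1-\max_{i\ge2}|1-b\lambda_i|\,/\max\left(1,\max_{i\ge2}|1-b\lambda_i|\right)$, whose maximum over $b$ is $2\lambda_2/(\lambda_2+\lambda_n)$, attained precisely at the balance point $b=2/(\lambda_2+\lambda_n)$; so ``balancing the $\lambda_2$ and $\lambda_n$ contributions'' reproduces the very bound you were trying to exceed rather than surpassing it. (Minor slip along the way: $\Theta=\alpha I$, $\Theta_1=\Theta_2=\beta I$ gives $\varphi(X)=((\alpha+\beta)I-\beta L)X$, not $(\alpha I-\beta L)X$; harmless after reparametrization.) Moreover, when $\lambda_2(L(\mc{G}_2))>1$ (e.g.\ $\mc{G}_2$ complete), no choice of weights of any kind can give $RS=\lambda$ for $\lambda\in(1,\lambda_2]$, because $RS\le 1$ by definition. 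So the remaining regime cannot be handled by elementary one-variable estimates on scalar weights; either the statement must be read with the implicit restriction $\lambda\le 2\lambda_2/(\lambda_2+\lambda_n)$, or a genuinely different construction would be required, and for $\lambda>1$ none exists. In short: your argument is correct, and coincides with the paper's, on $[0,2\lambda_2/(\lambda_2+\lambda_n)]$; the leftover interval is a real gap in your sketch, and it is also an unacknowledged gap in the paper's own proof.
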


\begin{proof}
The map $\varphi$ is given by
\[
\varphi(X;\Theta, \Theta_1, \Theta_2) = X\Theta + D^{-1/2}A_1D^{-1/2}X\Theta_1 + D^{-1/2}A_2D^{-1/2}X\Theta_2, 
\] 
where $A = A_1 + A_2$ is the adjacency matrix of $\mc{G}_2$. Here $D$ is the degree matrix of $\mc{G}_2$. Fix $\alpha \in [0, 1]$, and take $\Theta_1 = \Theta_2 = \alpha I$ and $\Theta = (1 - \alpha)I$. Then
\begin{align*}
\varphi(X) &= (1 - \alpha)X + \alpha D^{-1/2}A_1D^{-1/2}X + \alpha D^{-1/2}A_2D^{-1/2}X\\
&=  (1 - \alpha)X + \alpha D^{-1/2}A D^{-1/2}X = I - \alpha L,
\end{align*}
where $L$ is the normalized Laplacian of $\mc{G}_2$. We will show that $\varphi$ smooths with rate $\alpha \lambda_2(L)$. Let $L = U \Sigma U^{-1}$ be an orthogonal diagonalization of $L$. {Let $\lambda_1, \ldots, \lambda_n$ be the eigenvalues of $L$ in ascending order and recall that $0 \leq \lambda_i \leq 2$ for all $i$ with $\lambda_1 = 0$.} We have
\begin{align*}
    \mathscr{E}(\varphi(X)) &= \tr(X^TL(I - \alpha L)^2 X)
    = \tr(X^T U \Sigma(I - \alpha \Sigma)^2 U^TX) %
    = \textstyle\sum_{i, j} \lambda_i (1 - \alpha \lambda_i)^2 ((U^TX)_{i, j})^2\\
    &\stackrel{(a)}{\leq} \textstyle\sum_{i, j}\lambda_i(1 - \alpha \lambda_2)^2((U^TX)_{i, j})^2
    = (1 - \alpha \lambda_2)^2\tr(X^TU \Sigma U^T X)= (1 - \alpha \lambda_2)^2\mathscr{E}(X),
\end{align*}
with equality in (a) when $(U^TX)_{i, j} = 0$ for $i \neq 2$. Hence,
\begin{align}\label{num}
\sup_{\mathscr{E}(X) \neq 0} \frac{\mathscr{E}(\varphi(X))}{\mathscr{E}(X)} = (1 - \alpha \lambda_2)^2.
\end{align}
Next, we compute
\begin{align*}
    \|\varphi(X)\|_F^2 &= \tr(X^T(I - \alpha L)^2 X)
    = \tr(X^TU(I - \alpha \Sigma)^2 UX)\\
    &= \textstyle\sum_{i, j} (1 - \alpha \lambda_i)^2 ((UX)_{i, j})^2\stackrel{(b)}{\leq} \textstyle\sum_{i, j}((UX)_{i, j})^2 = \|UX\|_F^2= \|X\|_F^2,
\end{align*}
with equality in (b) when $(UX)_{i, j} = 0$ for $i \neq 1$. Hence
\begin{align}\label{denom}
\sup_{X \neq 0}\frac{\|\varphi(X)\|_F^2}{\|X\|_F^2} = 1.
\end{align}
Combining (\ref{num}) and (\ref{denom}) yields $RS(\varphi) = \alpha \lambda_2 = \alpha \lambda_2(L)$. 
Since this holds for any $\alpha \in [0, 1]$, the result follows.
\end{proof} 
In Appendix~\ref{app:tradeoff-ovsq-ovsm}, we empirically demonstrate that R-GNNs achieve lower Dirichlet energy than GNNs.

\section{FoSR: First order Spectral Rewiring}
\label{sec:1}

In order to use the R-GCN framework, we need to make a choice of rewiring algorithm that is capable of adding edges without removing any.
We propose a first-order spectral rewiring algorithm (FoSR) with the goal of improving graph connectivity.
The \emph{rate of spectral expansion}, i.e., the rate at which the spectral gap improves as a function of the rewiring iterations is one of the key determinants of oversquashing \citep{banerjee2022oversquashing, deac2022expander}. 
Compared to existing approaches such as SDRF \citep{topping2022understanding} and G-RLEF \citep{banerjee2022oversquashing}, the proposed algorithm has a faster rate of spectral expansion. See Figure~\ref{fig:spectralexpansion} for an illustration.
Our algorithm FoSR follows a similar philosophy as \citet{chan2016optimizing} optimizing the spectral gap of the input graph by sequentially adding edges
such that the rate of spectral expansion is maximized. 
At each step, we wish to add the edge that maximizes the spectral gap of the resulting graph.
However, computing the spectral gap for each edge addition is expensive as it requires us to compute eigenvalues for $O(n^2)$ different matrices. Instead of computing this quantity directly, we use a first-order approximation of the spectral gap based on matrix perturbation theory. 
The following theorem determines the first-order change of the eigenvalues of a perturbed symmetric matrix:

\begin{theorem}%
\label{first-order-lambda}
For symmetric matrices $M \in \Rb^{n \times n}$ with distinct eigenvalues, the $i$-th largest eigenvalue
$\lambda_i$ satisfies 
\[
\nabla_M \lambda_i(M) = x_ix_i^T, 
\]
where $x_i$ denotes the (normalized) eigenvector for the $i$-th largest eigenvalue of $M$. 
\end{theorem}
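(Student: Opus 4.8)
The plan is to compute the derivative of $\lambda_i$ via first-order perturbation theory, treating $M$ as varying along an arbitrary symmetric direction $H$ and showing the directional derivative equals $\langle x_i x_i^T, H\rangle = x_i^T H x_i$; since this holds for all symmetric $H$, the gradient (with respect to the standard inner product on symmetric matrices) is $x_i x_i^T$. First I would set up the one-parameter family $M(t) = M + tH$ and invoke the fact that, when $\lambda_i$ is a simple eigenvalue of $M$, both the eigenvalue $\lambda_i(t)$ and a corresponding unit eigenvector $x_i(t)$ can be chosen to depend smoothly (analytically) on $t$ in a neighborhood of $t=0$ — this is the classical analytic perturbation result for symmetric/Hermitian matrices (Rellich / Kato), and the distinctness hypothesis is exactly what guarantees no eigenvalue crossing and hence smoothness.

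The core computation is then short. Differentiate the identity $M(t) x_i(t) = \lambda_i(t) x_i(t)$ at $t = 0$ to get $H x_i + M \dot{x}_i = \dot{\lambda}_i x_i + \lambda_i \dot{x}_i$. Left-multiply by $x_i^T$ and use $x_i^T M = \lambda_i x_i^T$ (symmetry of $M$) to cancel the $\dot{x}_i$ terms, together with $x_i^T x_i = 1$, yielding $\dot{\lambda}_i = x_i^T H x_i$. Writing this as $\tr\big((x_i x_i^T)^T H\big)$ identifies the gradient as $\nabla_M \lambda_i(M) = x_i x_i^T$. I would also note the normalization constraint is consistent: differentiating $x_i(t)^T x_i(t) = 1$ gives $x_i^T \dot{x}_i = 0$, which is what makes the cancellation above clean.

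The main obstacle — really the only subtle point — is justifying the differentiability of $\lambda_i$ and of the eigenvector as functions of $M$; without simplicity of the eigenvalue this can fail (eigenvalues of symmetric matrices are only Lipschitz in general, not differentiable, at crossings). This is precisely why the statement assumes $M$ has distinct eigenvalues, so I would simply cite the standard analytic perturbation theorem rather than reprove it. One should also mention the mild ambiguity that $x_i$ is only defined up to sign, but $x_i x_i^T$ is sign-independent, so the gradient is well-defined. Everything else is the two-line differentiation above, and in the application (Proposition~\ref{proposition:addededge}) this derivative is contracted against the rank-structured perturbation matrix corresponding to adding a single edge.
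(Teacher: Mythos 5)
Your proof is correct, but it follows a different route from the paper's. You differentiate the eigenpair identity $M(t)x_i(t)=\lambda_i(t)x_i(t)$ along an arbitrary symmetric direction $H$, left-multiply by $x_i^T$ to kill the $\dot{x}_i$ terms, and read off $\dot{\lambda}_i = x_i^T H x_i = \tr\left((x_ix_i^T)^T H\right)$; this is the classical Rellich/Kato argument and it requires smooth dependence of \emph{both} the eigenvalue and the eigenvector on $t$, which the simplicity hypothesis indeed supplies. The paper instead takes differentiability of $\lambda_i$ alone as given (citing Stewart and Sun), and evaluates the differential on the special family of directions $UE_{j,k}U^{-1}$, where $U$ diagonalizes $M_0$: along the curve $\gamma(t)=M_0+tUE_{j,k}U^{-1}$ the eigenvalues can be computed exactly as $\lambda_i+\delta_{i,j}\delta_{i,k}t$ for small $t$, so the differential is known on a spanning set of directions, and the gradient is then reconstructed by the trace-duality identity $\tr\left(U^{-1}\nabla\lambda_i(M_0)U\,E_{j,k}\right)=\tr\left(E_{i,i}E_{j,k}\right)$, giving $\nabla\lambda_i(M_0)=UE_{i,i}U^{-1}=x_ix_i^T$. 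The trade-off: your argument is shorter and gives the directional derivative in an arbitrary direction in one line, but leans on eigenvector differentiability; the paper's argument never differentiates an eigenvector (only the scalar $\lambda_i$), at the cost of a longer linear-algebra reconstruction from special directions (which, incidentally, are not symmetric matrices, whereas you restrict to symmetric $H$ --- both conventions yield the same symmetric gradient $x_ix_i^T$). Your remarks on the sign ambiguity of $x_i$ and on $x_i^T\dot{x}_i=0$ are correct and harmless additions.
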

We provide a proof in Appendix~\ref{app:details-sec3}. 
Stated differently, the previous theorem allows us to make the linear approximation
\begin{equation}
    \lambda_2(M + \delta M) 
    \approx \lambda_2(M) + \tr(\nabla \lambda_2 ^T(\delta M)) 
    = \lambda_2(M) + x_2
    ^T(\delta M)x_2
    . 
    \label{eq:approx-spectrum}
\end{equation} 

Let us apply the approximation \eqref{eq:approx-spectrum} to the spectrum of a graph $\mc{G}$. We wish to sequentially add an edge $(u, v)$ in a way that maximizes the spectral gap of the resulting graph. 
Equivalently, the second-largest eigenvalue of $D^{-1/2}AD^{-1/2}$ should be minimized, since this matrix is equal to $I - L$. 
Using Theorem~\ref{first-order-lambda} for the matrix $D^{-1/2}AD^{-1/2}$, we obtain the following result. 

\begin{proposition}\label{proposition:addededge}
The first-order change in  $\lambda_2 = \lambda_2(D^{-1/2}AD^{-1/2})$ from adding the edge $(u, v)$ is
\begin{align}\label{first-order-change}
\frac{2x_ux_v}{(\sqrt{1 + d_u})(\sqrt{1 + d_v})}  + 2 \lambda_2 x_u^2\left(\frac{\sqrt{d_u}}{\sqrt{1 + d_u}} - 1\right)+ 2 \lambda_2 x_v^2\left(\frac{\sqrt{d_v}}{\sqrt{1 + d_v}} - 1\right), 
\end{align}
where 
$x$ denotes the second eigenvector of $D^{-1/2}AD^{-1/2}$, and $x_u$ denotes the $u$-th entry of $x$.
\end{proposition}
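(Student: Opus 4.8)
The plan is to express the first-order change as a single bilinear quantity in the current eigenvector and then evaluate that quantity exactly. Write $M = D^{-1/2}AD^{-1/2}$ for the matrix associated with $\mc{G}$, and let $M' = (D')^{-1/2}A'(D')^{-1/2}$ be the analogous matrix after adding the edge $(u,v)$, where $A' = A + e_ue_v^{T} + e_ve_u^{T}$ (with $e_u$ the $u$-th standard basis vector) and $D'$ is the degree matrix of the new graph; we may assume $(u,v)\notin\mc{E}$, so that $A_{uv}=0$, and that $\lambda_2$ is a simple eigenvalue of $M$. By Theorem~\ref{first-order-lambda}, $\nabla_M\lambda_2 = xx^{T}$ with $x$ the normalized second eigenvector of $M$, so the linearization \eqref{eq:approx-spectrum} applied with $\delta M = M' - M$ shows that the first-order change in $\lambda_2$ equals $\tr\big(xx^{T}(M'-M)\big) = x^{T}M'x - x^{T}Mx = x^{T}M'x - \lambda_2$. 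It therefore suffices to compute $x^{T}M'x$.

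For that computation I would introduce the renormalized vectors $y := D^{-1/2}x$ and $y' := (D')^{-1/2}x$. Since $D'$ differs from $D$ only in the diagonal entries at $u$ and $v$ (from $d_u,d_v$ to $1+d_u,1+d_v$), we have $y'_i = y_i$ for $i\notin\{u,v\}$, while $y'_u = \sqrt{\tfrac{d_u}{1+d_u}}\,y_u$ and $y'_v = \sqrt{\tfrac{d_v}{1+d_v}}\,y_v$. Then
\[
x^{T}M'x = (y')^{T}A'\,y' = (y')^{T}A\,y' + 2\,y'_u y'_v ,
\]
and the second term is already the first summand of \eqref{first-order-change}, namely $2x_ux_v/\big(\sqrt{1+d_u}\sqrt{1+d_v}\big)$.

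The main step is to expand $(y')^{T}Ay'$ around $y^{T}Ay$. Writing $y' = y + \epsilon$, where $\epsilon$ is supported on $\{u,v\}$ with $\epsilon_u = \big(\sqrt{\tfrac{d_u}{1+d_u}}-1\big)y_u$ and $\epsilon_v$ defined analogously, symmetry of $A$ gives $(y')^{T}Ay' - y^{T}Ay = 2\,\epsilon^{T}(Ay) + \epsilon^{T}A\epsilon$. The quadratic term vanishes, since $\epsilon^{T}A\epsilon = A_{uu}\epsilon_u^{2} + 2A_{uv}\epsilon_u\epsilon_v + A_{vv}\epsilon_v^{2} = 0$ because $A$ has no self-loops and $A_{uv}=0$. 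For the linear term the key identity is $Ay = AD^{-1/2}x = D^{1/2}(D^{-1/2}AD^{-1/2})x = \lambda_2 D^{1/2}x$, whence $(Ay)_u = \lambda_2\sqrt{d_u}\,x_u$ and $(Ay)_v = \lambda_2\sqrt{d_v}\,x_v$. Substituting $\epsilon_u = \tfrac{x_u}{\sqrt{d_u}}\big(\tfrac{\sqrt{d_u}}{\sqrt{1+d_u}}-1\big)$ gives $2\epsilon_u(Ay)_u = 2\lambda_2 x_u^{2}\big(\tfrac{\sqrt{d_u}}{\sqrt{1+d_u}}-1\big)$, and likewise at $v$. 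Finally $y^{T}Ay = x^{T}D^{-1/2}AD^{-1/2}x = x^{T}Mx = \lambda_2$, so in $x^{T}M'x - \lambda_2$ this term cancels the leading $\lambda_2$ and the remaining pieces assemble precisely into \eqref{first-order-change}.

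The argument is essentially careful bookkeeping, so I do not anticipate a genuine obstacle; the two points that need attention are (i) interpreting ``first-order change'' correctly as $x^{T}(M'-M)x$ via \eqref{eq:approx-spectrum}, rather than as a derivative of $\lambda_2$ along some unrelated path, and (ii) tracking exactly which diagonal entries pass from $D$ to $D'$ and using $A_{uv}=0$ together with the absence of self-loops to kill every diagonal and cross term in $\epsilon^{T}A\epsilon$. The one simplification worth isolating up front is the identity $Ay = \lambda_2 D^{1/2}x$, which converts the sum $\epsilon^{T}(Ay)$ — a priori a sum over the neighbourhoods of $u$ and $v$ — directly into the two stated degree-correction terms, so that no neighbour sums are ever expanded explicitly.
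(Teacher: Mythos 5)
Your proof is correct, and it computes exactly the same quantity as the paper: by Theorem~\ref{first-order-lambda} and \eqref{eq:approx-spectrum}, the first-order change is $x^T(M'-M)x$ with $M'-M$ the exact change in the normalized adjacency matrix, and both arguments then simplify this quadratic form using the eigenvector relation. Where you differ is in the bookkeeping. The paper writes out the entry-wise perturbation $\delta A_N$ case by case (entries in row/column $u$ or $v$, plus the new $(u,v)$ entry), expands $x^T(\delta A_N)x$ into explicit sums over the neighborhoods of $u$ and $v$, and only at the end collapses those sums via $A_N x = \lambda_2 x$. You instead factor the change through the rescaled vectors $y = D^{-1/2}x$ and $y' = (D')^{-1/2}x$, split $A' = A + e_ue_v^T + e_ve_u^T$, and invoke $Ay = \lambda_2 D^{1/2}x$, so neighbor sums never appear: the cross term $2\epsilon^T(Ay)$ produces the two degree-correction terms directly, and $\epsilon^T A \epsilon = 0$ because $A_{uu}=A_{vv}=A_{uv}=0$. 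Both routes rest on the same key lemma and yield \eqref{first-order-change}; yours is tighter and less error-prone, and it makes transparent that the stated expression is the \emph{exact} value of $x^T(M'-M)x$ (the only approximation being the linearization of $\lambda_2$ itself), while the paper's derivation has the minor virtue of exhibiting each perturbed matrix entry explicitly. The side conditions you flag — $(u,v)\notin\mc{E}$, no self-loops, simplicity of $\lambda_2$ — are exactly the ones implicit in the paper's argument, so nothing is lost there.
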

We provide a proof in Appendix~\ref{app:details-sec3}. 
We choose to minimize the first term in~\eqref{first-order-change}
\begin{align}\label{FoSR-formula}
    \frac{2x_ux_v}{\sqrt{(1 + d_u)(1 + d_v)}} , 
\end{align}
since this is simpler and cheaper to compute. To see why this approximation is reasonable, note that
    $\frac{1}{(\sqrt{1 + d_u})(\sqrt{1 + d_v})} \sim (d_ud_v)^{-1/2}$
as $d_u, d_v \to \infty$, while
    $\frac{\sqrt{d_u}}{\sqrt{1 + d_u}} - 1 \sim d_u^{-2}$.
So if the degrees of the nodes are sufficiently large and comparable in size, the first term in \eqref{first-order-change} dominates. Again, our decision to optimize the first term is chosen for computational reasons. In Appendix~\ref{app:compute-time}, we show that FoSR has significantly smaller computation overhead compared to SDRF on real and synthetic datasets. Further, we show in Appendix~\ref{app:rate-sg-TUd} that FoSR has a much faster rate of spectral expansion compared to SDRF. Finally in Appendix~\ref{app:approx-error-FoSR}, we discuss the approximation error of FoSR.

For each iteration that it runs, FoSR computes an approximation of the second eigenvector $x\in \Rb^n$ of $D^{-1/2}AD^{-1/2}$ and chooses an edge $(u, v)$ which minimizes (\ref{FoSR-formula}). To produce an initial approximation of $x$, we use power iteration (rather than computing a full eigendecomposition). 
Let $d \in \Rb^n$ denote the vector of degrees of $\mc{G}$. 
Let $\sqrt{d}$ denote the entry-wise square root of $d$, i.e., with components $\sqrt{d}_i$. 
Recall that $\sqrt{d}$ is the first eigenvector of $D^{-1/2}AD^{-1/2}$.
Since $x$ is the second eigenvector of $D^{-1/2}AD^{-1/2}$, we may approximate it by repeatedly applying $D^{-1/2}AD^{-1/2}$ and then subtracting the component in the direction of the first eigenvector. This map is given by
\[
x \mapsto D^{-1/2}AD^{-1/2}x - \frac{\langle x, \sqrt{d} \rangle}{2m}\sqrt{d}. 
\]
After applying this map, we normalize $x$ back to a unit vector, $
x \mapsto \frac{x}{\|x\|}$.
FoSR operates by repeatedly alternating between adding an edge and computing a new approximation of $x$ via power iteration. 
Importantly, our algorithm avoids computing a full eigendecomposition of $\cG$ by only approximating its second eigenvector. We compute increasingly accurate estimates of this eigenvector using power iteration. 
The steps of our method are outlined in Algorithm~\ref{alg:FoSR}.  

\begin{algorithm}[H]
	\begin{algorithmic}[1]
		\Statex{\textbf{Input}: $\cG=(\cV,\cE)$, iteration count $k$, initial number of power iterations $r$}
		\Statex{\textbf{Output}: Rewired graph $\cG' = (\cV, \cE')$}
		\State{Initialize $x \in \mathbb{R}^n$ arbitrarily} 
		\For{$i = 1, 2, \cdots, r$}
		\State{$x \gets D^{-1/2}AD^{-1/2}x - \frac{\langle x, \sqrt{d} \rangle}{2m}\sqrt{d}$} \Comment{Approximate second eigenvector before rewiring}
		\State{$x \gets \frac{x}{\|x\|_2}$}
		\EndFor
		\For{$i = 1, 2, \cdots, k$}
		\State{Add edge $(i, j)$ which minimizes $\frac{x_ix_j}{\sqrt{(1 + d_i)(1 + d_j)}}$}
	    \State{$x \gets D^{-1/2}AD^{-1/2}x - \frac{\langle x, \sqrt{d} \rangle}{2m}\sqrt{d}$}\Comment{Power iteration to update second eigenvector}
		\State{$x \gets \frac{x}{\|x\|_2}$} 
		\EndFor
	\end{algorithmic}
	\caption{\textbf{FoSR: First-order Spectral Rewiring}}
	\label{alg:FoSR}
\end{algorithm}

The number of edge additions $k$ and the initial number of power iterations $r$ are hyperparameters. The number of power iterations $r$ only needs to be chosen large enough to produce an initial approximation of the eigenvector $x$. In practice, we found that taking $r$ between 5 and 10 is sufficient. 
The proper choice of iteration count $k$ is problem-specific, and can be chosen to be such that the spectral gap increases sufficiently. We tried multiple configurations of $k$ when training models. 
\paragraph{Computational complexity} Taking advantage of the sparsity of $\mc{G}$, FoSR requires $O(m)$ floating-point operations for each power iteration and $O(n^2)$ operations for each edge added. The $O(n^2)$ operations come from having to search over all node pairs $(i, j)$ for the minimal value of $y_iy_j$, where $y_i = x_i/\sqrt{1 + d_i}$. This results in a total cost of $O(kn^2)$. When the graph is sparse, we can often compute the minimal value of $y_iy_j$ in a faster way. We can choose $i = \argmin_k y_k$ and $j = \argmax_k y_k$ if $\min_k y_k \leq 0$ and the $(i, j)$ chosen in this way is not already an edge. Indeed, the probability of $(i, j)$ not already being an edge is higher if $G$ is sparse. In this case, the edge addition only takes $O(m)$ operations. More generally, we can relax the minimization by choosing $i = \argmin_k y_k$ and $j = \argmax_{k \notin \mc{N}(i)} y_k$ when $\min_k y_k \leq 0$. We can use a similar relaxation when $\min_k y_k > 0$; take $i = \argmin_k y_k$ and $j = \argmin_{k \notin \mc{N}(i)} y_k$. If we use the above relaxations, the total cost is $O(km)$.

\section{Experiments}
\label{sec:experiments}

We compare our rewiring methods to existing ones to demonstrate the efficacy of relational rewiring as well as rewiring informed by spectral expansion on several graph classification tasks. 

\paragraph{Datasets} We consider graph classification tasks 
REDDIT-BINARY, IMDB-BINARY, MUTAG, ENZYMES, PROTEINS, COLLAB from the TUDataset \citep{morris2020TUDataset} 
where the topology of the graphs in relation to the task has been identified to require long-range interactions. 
While certain node classification tasks have also been considered in previous works, these have been found to be tractable with nearest neighbor information \citep{brockschmidt2020gnn}.

\paragraph{Compared methods} 
We focus on approaches that preprocess the input graph by adding edges. 
Diffusion Improves Graph Learning (DIGL) \citep{klicpera2019diffusion} is a diffusion-based rewiring scheme that computes a kernel evaluation of the adjacency matrix, followed by sparsification. 
SDRF \citep{topping2022understanding} surgically rewires a graph by adding edges to support other edges with low curvature, which are locations of bottlenecks. For SDRF, we include results for both the original method (configured to only add edges), and our relational method (again only add edges, and include the added edges with their own relation). Fully adjacent layers \citep{alon2021on} rewire the graph by adding all possible edges, setting $\mathcal{E}_2 = (\mathcal{V} \times \mathcal{V}) \setminus \mathcal{E}_1$. We include results for rewiring only the last layer (last layer FA) and rewiring every layer (every layer FA).

\paragraph{Experimental details}

 We test each rewiring algorithm on an R-GCN \citep{schlichtkrull2018modeling}, with the relations varying depending on the rewiring method. 
 For rewirings which do not explicitly assign relations to edges, we use two relations: one for the default edge type, and one for self-loops. In other words, we simply use the rewired graph and add residual connections to the architecture. 
 For each task, we use the same GNN architecture for all rewiring methods to illustrate how the choice of rewiring affects test performance. 
 That is, we hold the number of layers, hidden dimension, and dropout probability constant across all rewiring methods for a given dataset. The hyperparameter values are reported in Appendix~\ref{sec:experiments-hyperparameters}.
 However, for each dataset, we separately tune all hyperparamters specific to the rewiring method (such as the number of iterations for FoSR and SDRF). 
 We train all models with an Adam optimizer with a learning rate of $10^{-3}$ and a scheduler which reduces the learning rate by a factor of 10 after 10 epochs with no improvement in validation loss. 
For FoSR and SDRF, we only tuned the number of rewiring iterations. For DIGL \citep{klicpera2019diffusion}, we tuned the sparsification threshold ($\epsilon$) and teleport probability ($\alpha$).

 For optimizing hyperparamters and evaluating each configuration, we first select a test set consisting of 10 percent of the graphs and a development set consisting of the other 90 percent of the graphs. We determine accuracies of each configuration using 100 random train/validation splits of the development set consisting of 80 and 10 percent of the graphs, respectively. When training the GNNs, we use a stopping patience of 100 epochs based on the validation loss. We evaluate each hyperparameter configuration using the validation accuracy, so the test set is only used once to generate results for the best hyperparamter configurations. For the test results, we record 95 percent confidence intervals for the hyperparameters with the best validation accuracy using the 100 runs. 

\paragraph{Results} 
Table~\ref{tab:expresults} shows the results of our experiments on various GNN layer types. Shown is the test accuracy. 
For each rewiring method, using a relational GNN type improves classification accuracy. 
The boost in accuracy from using an R-GNN is greatest for the rewiring methods which surgically add edges (FoSR and SDRF). 
Out of all of the rewiring methods, FoSR typically achieves the best classification accuracy when relational rewiring is used. 
When relational rewiring is not used, FoSR still outperforms DIGL and SDRF for most datasets. We note that +FA is sometimes competitive, but this is highly dependent on the GNN architecture. In particular, it does not perform well for GCNs and R-GCNs.
Our results are also very competitive against results reported for the DiffWire methods by \citet[Table~1]{arnaiz2022diffwire}, although they are using a different type of architecture that is not directly comparable to preprocessing methods like ours or SDRF which sequentially add edges. 
\begin{table}[H]
	\caption{Results of rewiring methods for GCN and GIN comparing standard and relational. 
	The best results in each setting are highlighted in bold font and best across settings are highlighted red.} 
	\label{tab:expresults}
	\begin{adjustwidth}{-2in}{-2in}
			\centering
			\scriptsize %
			\bgroup
			\def\arraystretch{1.2}
			\setlength{\tabcolsep}{1.5pt}
			\begin{tabular}{lcccccc} 
        \multicolumn{7}{c}{\footnotesize GCN}\\ 
		\toprule 
        Rewiring & REDDIT-BINARY & IMDB-BINARY & MUTAG & ENZYMES & PROTEINS & COLLAB \\ \hline
        None & $68.255 \pm 1.098$ & $49.770 \pm 0.817$ & $72.150 \pm 2.442$ & $27.667 \pm 1.164$ & $70.982 \pm 0.737$ & $33.784 \pm 0.488$ \\
        Last layer FA & $68.485 \pm 0.945$&$48.980 \pm 0.945$&$70.050 \pm 2.027$&$26.467 \pm 1.204$&$71.018 \pm 0.963$&$33.320 \pm 0.435$\\
        Every layer FA &$48.490 \pm 1.044$&$48.170 \pm 0.801$&$70.450 \pm 1.960$&$18.333 \pm 1.038$&$60.036 \pm 0.925$&$\mathbf{51.798} \pm 0.419$ \\
        DIGL&$49.980 \pm 0.680$&$\mathbf{49.910} \pm 0.841$&$71.350 \pm 2.391$&$27.517 \pm 1.053$&$70.607 \pm 0.731$&$15.530 \pm 0.294$\\
        SDRF & $68.620 \pm 0.851$ & $49.400 \pm 0.904$ & $71.050 \pm 1.872$ & $\mathbf{28.367} \pm 1.174$ & $70.920 \pm 0.792$ & $33.448 \pm 0.472$ \\  
        FoSR & $\mathbf{70.330} \pm 0.727$ & $49.660 \pm 0.864$ & $\mathbf{80.000} \pm 1.574$ & $25.067 \pm 0.994$ & $\mathbf{73.420} \pm 0.811$ & $33.836 \pm 0.584$ \\ 
        \bottomrule\\[-1em]
		\multicolumn{7}{c}{\footnotesize R-GCN}\\	
		\toprule		
        Rewiring & REDDIT-BINARY & IMDB-BINARY & MUTAG & ENZYMES & PROTEINS & COLLAB  \\ \hline
        None & $49.850 \pm 0.653$ & $50.012 \pm 0.917$ & $69.250 \pm 2.085$ & $28.600 \pm 1.186$ & $69.518 \pm 0.725$ & $33.602 \pm 1.047$ \\
        Last layer FA & $49.800 \pm 0.626$ & $50.650 \pm 0.964$ & $70.550 \pm 1.810$ & $28.233 \pm 1.138$ & $69.527 \pm 0.815$ & $34.732 \pm 1.194$ \\
        Every layer FA & $49.950 \pm 0.593$&$50.500 \pm 0.891$&$70.500 \pm 1.836$&$33.400 \pm 1.142$&$71.670 \pm 0.882$&$33.616 \pm 0.978$\\

        DIGL & $49.995 \pm 0.619$&$49.670 \pm 0.843$&$73.400 \pm 2.007$&$28.283 \pm 1.213$&$68.232 \pm 0.851$&$16.926 \pm 1.441$\\
        SDRF & $58.620 \pm 0.647$ & $53.640 \pm 1.043$ & $72.300 \pm 2.215$ & $33.483 \pm 1.245$ & $69.107 \pm 0.759$ & $67.990 \pm 0.386$ \\
        FoSR & $\mathbf{76.590} \pm 0.531$ & $\mathbf{64.050} \pm 1.123$ & $\mathbf{84.450} \pm 1.517$ & $\mathbf{35.633} \pm 1.151$ & $\mathbf{73.795} \pm 0.692$ & $\mathbf{70.650} \pm 0.482$ \\
        \bottomrule\\[-1em]
        \multicolumn{7}{c}{\footnotesize GIN}\\ 
		\toprule
        Rewiring & REDDIT-BINARY & IMDB-BINARY & MUTAG & ENZYMES & PROTEINS & COLLAB \\ \hline
         None & $86.785 \pm 1.056$ & $70.180 \pm 0.992$ & $77.700 \pm 3.602$ & $33.800 \pm 1.115$ & $70.804 \pm 0.827$ & $72.992 \pm 0.384$ \\
         Last layer FA & $\bo{90.220} \pm 0.4750$&$70.910 \pm 0.788$&$\bm{83.450} \pm 1.742$&$47.400 \pm 1.387$&$72.304 \pm 0.666$&$\bf{75.056} \pm 0.406$\\
          Every layer FA &$50.360 \pm 0.648$&$49.160 \pm 0.870$&$72.550 \pm 3.016$&$28.383 \pm 1.052$&$70.375 \pm 0.910$&$32.894 \pm 0.390$\\
         DIGL&$76.035 \pm 0.774$&$64.390 \pm 0.907$&$79.700 \pm 2.150$&$35.717 \pm 1.198$&$70.759 \pm 0.774$&$54.504 \pm 0.410$\\
        SDRF & $86.440 \pm 0.590$ & $69.720 \pm 1.152$ & $78.400 \pm 2.803$ & $\mathbf{35.817} \pm 1.094$ & $69.813 \pm 0.792$ & $72.958 \pm 0.419$ \\ 
        FoSR & $87.350 \pm 0.598$ & $\mathbf{71.210} \pm 0.919$ & $78.000 \pm 2.217$ & $29.200 \pm 1.376$ & $\bo{75.107} \pm 0.817$ & $73.278 \pm 0.416$ \\ 
        \bottomrule\\[-1em]
        \multicolumn{7}{c}{\footnotesize R-GIN}\\	
		\toprule
        Rewiring & REDDIT-BINARY & IMDB-BINARY & MUTAG & ENZYMES & PROTEINS & COLLAB \\ \hline
        None & $87.965 \pm 0.564$ & $68.889 \pm 0.872$ & $83.050 \pm 1.439$ & $39.017 \pm 1.166$ & $70.500 \pm 0.809$ & $75.544 \pm 0.323$ \\
        Last layer FA & $\bf{89.995} \pm 0.647$ & $69.710 \pm 1.025$ & $80.600 \pm 1.639$ & $48.183 \pm 1.401$ & $70.304 \pm 0.844$ & $75.434 \pm 0.491$\\
        Every layer FA & $56.855 \pm 0.943$&$71.480 \pm 0.876$&$83.050 \pm 1.518$&$\bo{54.950} \pm 1.331$&$71.045 \pm 0.909$&$75.432 \pm 0.475$\\
        DIGL & $74.425 \pm 0.723$&$63.930 \pm 0.947$&$81.450 \pm 1.488$&$37.600 \pm 1.198$&$71.312 \pm 0.757$&$54.714 \pm 0.416$\\
        SDRF & $86.825 \pm 0.523$ & $70.210 \pm 0.806$ & $82.700 \pm 1.782$ & $39.583 \pm 1.333$ & $70.696 \pm 0.815$ & $76.480 \pm 0.388$ \\
        FoSR & $89.665 \pm 0.416$ & $\bo{71.810} \pm 0.880$ & $\bo{86.150} \pm 1.492$ & $45.550 \pm 1.258$ & $\mathbf{74.670} \pm 0.692$ & $\bo{76.806} \pm 0.451$ \\ 
        \bottomrule        
				\end{tabular}
			\egroup
		\end{adjustwidth} 
	\hspace{0.01pt}
	\linebreak[4]
\end{table}

\section{Conclusions}
We proposed an efficient graph rewiring method for preventing oversquashing based on %
iterative first-order maximization of the spectral gap. 
Further, we identified a shortcoming of existing %
rewiring approaches that can cause oversmoothing and propose a relational rewiring method to overcome this with theoretical guarantees. 
Experiments on several graph classification benchmarks demonstrate that the proposed methods can significantly improve the performance of GNNs. 
In future work it will be interesting to study the effects of oversmoothing and oversquashing in relation to training dynamics and %
investigate rewiring strategies that aid training. 

\paragraph{Reproducibility Statement} 
The computer implementation of the proposed methods along with scripts to re-run our experiments are made publicly available on \url{https://anonymous.4open.science/r/FoSR-0CE3/}. 
The experimental settings are described in detail in Section~\ref{sec:experiments} and Appendix~\ref{sec:experiments-hyperparameters}, which also details the compute infrastructure and selected hyperparameter values.

\subsubsection*{Acknowledgments}
This project has been supported by ERC Starting Grant 757983 and NSF CAREER Grant DMS-2145630. 

\addcontentsline{toc}{section}{References}

\bibliography{oversquashing}
\bibliographystyle{abbrvnat} 

\appendix 
\newpage 

\section*{Appendix}
\section{Spectral gap and Cheeger constant}
\label{sec:Cheeger}
\subsection{Quantifying structural bottlenecks via the Cheeger constant}\label{app:structuralbottleneck}
Since traditional GNNs use the input graph to propagate neural messages, structural characteristics of the input graph play a crucial role in the quality of signal propagation across nodes. Our method is motivated from the key idea that structural bottlenecks in the input graph lead to information oversquashing in GNNs. 
A classic measure of ``bottlenecked-ness'' of a graph is the \emph{Cheeger constant} \citep{chungbook} 
\begin{align}\label{eq:isoperimetricratio}
	h(\cG):=\min_{\varnothing\neq \cS\subset \cV} \frac{|\partial \cS|}{\min \left(\operatorname{vol} \cS, \operatorname{vol} \bar{\cS} \right)},
\end{align}
where $\operatorname{vol} \cS = \textstyle\sum_{v\in\cS} d_v$, $\bar{\cS}=\cV\setminus \cS$ and $\partial \cS = \{(u,v)\colon u \in \cS, v\in \bar{\cS}, (u,v)\in \cE\}$ is the edge boundary of $\cS\subset \cV$. 
{This definition considers the number of edges connecting two complementary subsets of nodes and their volume as measured by the sum of the degrees of the nodes, and then takes the minimum of this value over all possible non-trivial bipartitions of the graph.}

Note that $0\le h(\cG) \le 1$ for all $\cS\subset \cV$. $\cG$ is connected if and only if $h(\cG) > 0$. 

Intuitively, when $h(\cG)$ is large, $\cG$ has no structural bottlenecks in the sense that every part of $\cG$ is connected to the rest of it by a large fraction of its edges. 

A large $h(\cG)$ implies that there is a low probability of being trapped in a small subset of the nodes and consequently, a simple random walk over the nodes of $\cG$ is rapidly mixing \citep{levin2017markovBook}.

Well-connected graphs such as the complete graph over $n$ nodes, $K_n$ has a high Cheeger constant, while easily-disconnected graphs have a low Cheeger constant. An example of the latter is the dumbbell graph $K_n\text{--}K_n$ comprising of two cliques joined by a bridge (see Figure~\ref{fig:spectralexpansion}), which has a Cheeger constant of $1/(1+n(n-1))$.

\subsection{Bounding the Cheeger constant via the spectral gap}

In general, computing the exact value of $h(\cG)$ in practice is known to be a hard problem; see, e.g.,  \citet{leighton1999multicommodity,garey1974some,mohar1989isoperimetric,kaibel2004expansion}.
Nonetheless, it is possible to bound the Cheeger constant from below and from above in terms of the spectral gap as we explain next. Recall from Section~\ref{sec:prelim} that the spectral gap of a graph is the difference $\lambda_2-\lambda_1$ between the first two eigenvalues of the Laplacian (in increasing order), whereby $\lambda_1=0$.

The discrete Cheeger inequality \citep{cheeger1970lower,alon1984eigenvalues} shows that the spectral gap of $\cG$ provides an estimate of its Cheeger constant:
\begin{align}\label{eq:expansionSpectralgap}
	\frac{\lambda_2}{2} \le h(\cG) \le \sqrt{2\lambda_2}.
\end{align} 

To intuitively understand this inequality, we can consider the variational formulation of the spectral gap \citep{chungbook}:
\begin{align}\label{eq:spectral-gap-var}
\lambda_2 &= \inf_{x\perp D^{1/2}\bm{1}} \frac{x^TLx}{x^Tx}.
\end{align}
We can write the numerator above as
\begin{align*}
    x^TLx = \sum_{(i, j) \in \mc{E}}\left(\frac{x_i}{\sqrt{d_i}} - \frac{x_j}{\sqrt{d_j}} \right)^2.
\end{align*}
Let $\mc{S}$ be a subset of the nodes. We may encode $\mc{S}$ as a ``zero-one vector", writing $x_i = \sqrt{d_i}$ if $i \in \mc{S}$ and $x_i = 0$ if $i \notin \mc{S}$. This gives us
\[x^T Lx = \sum_{(i, j) \in \mc{E}} \ind_{(i, j) \in \partial \mc{S}} = |\partial \mc{S}|. \]
That is, we can encode terms from $h(\mc{G})$ into the expression for $\lambda_2$ if we use this variational formulation.

The discrete Cheeger inequality tells us that the spectral gap is bounded away from zero if and only if $h(\cG)$ is bounded away from zero. $\cG$ is connected if and only if $\lambda_2 > 0$. 
Since the spectral gap can be computed efficiently, we use it as a measure of graph bottlenecked-ness in lieu of the Cheeger constant.

\section{Spectral gap and oversquashing}
In this section, we provide some more intuition behind our idea of optimizing the spectral gap of the input graph for alleviating structural bottlenecks and improving the overall quality of information propagation across nodes.

The \emph{rate of spectral expansion}, i.e., the rate at which the spectral gap improves as a function of the rewiring iterations is one of the key determinants of oversquashing \citep{banerjee2022oversquashing}, and a faster rate is one of the main features of our new algorithm. 
We show that FoSR has a faster rate of spectral expansion compared to existing curvature-based methods such as SDRF \citep{topping2022understanding} for both the synthetic benchmark \textsc{NeighborsMatch} \citep{alon2021on} (Appendix~\ref{app:Fig1}) and the TUDataset \citep{morris2020TUDataset}  (Appendix~\ref{app:rate-sg-TUd}).

In Appendix~\ref{app:compute-time}, we show that optimizing only the first-order change in the spectral gap gives FoSR a significant computational advantage over SDRF.  
In Appendix~\ref{app:tradeoff-ovsq-ovsm}, we empirically explore the trade-off between oversquashing and oversmoothing as a function of the number of rewiring iterations. Finally in Appendix~\ref{app:approx-error-FoSR}, we discuss the approximation error of FoSR.

\subsection{Comparing the Rate of Spectral Expansion for FoSR and SDRF} \label{app:compare-spectral-exp}

\subsubsection{The \textsc{NeighborsMatch} problem (details on Figure~\ref{fig:spectralexpansion})} \label{app:Fig1}
\citet{alon2021on} introduced the synthetic benchmark \textsc{NeighborsMatch} to test the impact of oversquashing in GNNs.
Given an input graph $\cG$, suppose that we wish to predict the label for a node \textsf{T}; see Figure~\ref{fig:neighborsmatch-poc}(a). The correct label is the label of the green node that has the same number of blue neighbors as \textsf{T}. 
For the example in Figure~\ref{fig:neighborsmatch-poc}(a), the answer is \textsf{B} which happens to reside at the opposite end of the graph. A correct prediction of the label for this example will require a number of GNN layers $l$ that is equal to the diameter of the graph. 
With increasing $l$, however, at each message-passing layer, information from exponentially-growing receptive fields need to be concurrently propagated. This leads to oversquashing and the GNN fails to propagate long-range signals and fit the training dataset perfectly.

\begin{figure}[h!]
    \centering
    \includegraphics[scale=.26]{./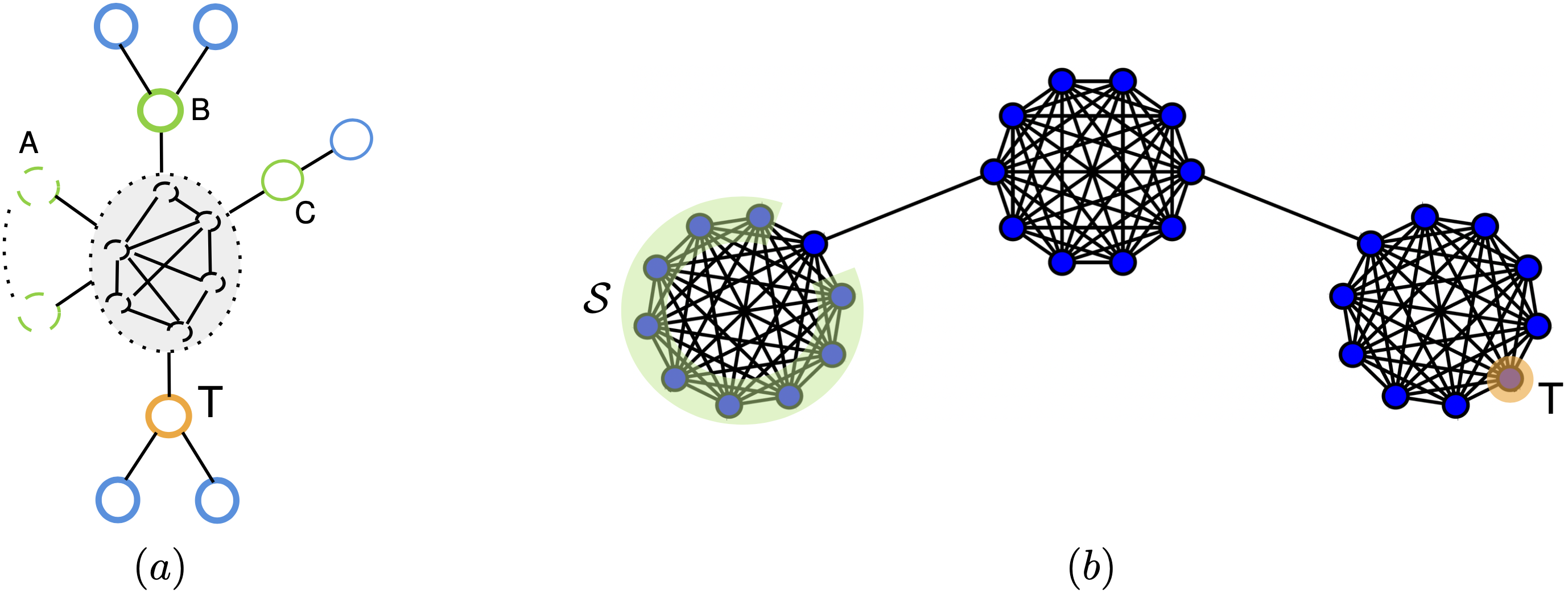}
    \caption{Learning task modeled on the \textsc{NeighborsMatch} problem.}
    \label{fig:neighborsmatch-poc}
\end{figure}

We consider a learning task modeled on the \textsc{NeighborsMatch} problem. 
Given an input graph $\cG$, a target node $\textsf{T}$, and a subset $\mathcal{S}$ of the nodes of $\cG$, we assign each node in $\mathcal{S}$ a different random $|\mathcal{S}|$-dimensional one-hot vector
encoding the number of blue neighbors.
Likewise, we represent the label of $\textsf{T}$ as a random $|\mathcal{S}|$-dimensional one-hot vector and the goal is to predict the node $\textsf{T'} \in \mathcal{S}$ with the same label as $\textsf{T}$.
We take the input $\cG$ to be a {path-of-cliques}, which comprises of three cliques each of size 10 connected in a path by two edges; see Figure~\ref{fig:neighborsmatch-poc}(b). The set $\mathcal{S}$ consists of the first 9 nodes in clique-1, and the target $\textsf{T}$ is the last node of clique-3. The range of the interaction, i.e., the maximum distance between $\textsf{T}$ and a node in $\mathcal{S}$ is $5$. 
We trained a graph attention network \citep{velivckovic2017graph} with $6$ layers each of width 64 on a training dataset comprising of 10000 copies of $G$, where each copy has a different mapping matching the target.

Figure~\ref{fig:spectralexpansion} (bottom) shows the evolution of the normalized spectral gap and training accuracy as a function of the number of rewiring iterations for three algorithms: our FoSR, SDRF \citep{topping2022understanding}, and G-RLEF \citep{banerjee2022oversquashing}. 
SDRF is motivated from the idea that negatively curved edges lead to oversquashing. The notion of curvature, which can be construed as a ``local'' Cheeger constant, plays a key role in the SDRF rewiring process. 
SDRF cycles between adding a supporting edge around a negatively curved edge and removing a redundant positively-curved edge; see Figure~\ref{fig:spectralexpansion} (top). Since the addition and removal of edges are done independently of each other, SDRF can potentially disconnect a connected input graph. G-RLEF, on the other hand, seeks to improve the ``global'' Cheeger constant $h(\cG)$ sampling edges according to inverse triangle counts (a proxy for curvature) and using a local edge flip mechanism so that the input graph is never disconnected. In our experiments, we configured SDRF to only add edges. 
For all three algorithms, we observe that both the normalized spectral gap and training accuracy increase monotonically in the number of rewiring steps until they saturate at around $150$ iterations. The rate of spectral expansion i.e., the rate at which the spectral gap improves as a function of the rewiring iterations is the fastest for FoSR.

\subsubsection{The TUDataset} \label{app:rate-sg-TUd}
Figure~\ref{fig:spectral-exp-TUDataset} shows the normalized spectral gap as a function of the number of rewiring iterations for FoSR and SDRF on the TUDataset \citep{morris2020TUDataset}. Again, we observe that FoSR has a much faster rate of spectral expansion compared to that of the SDRF \citep{topping2022understanding}.

\begin{figure}[h!]
    \centering
    \includegraphics[scale=.28]{./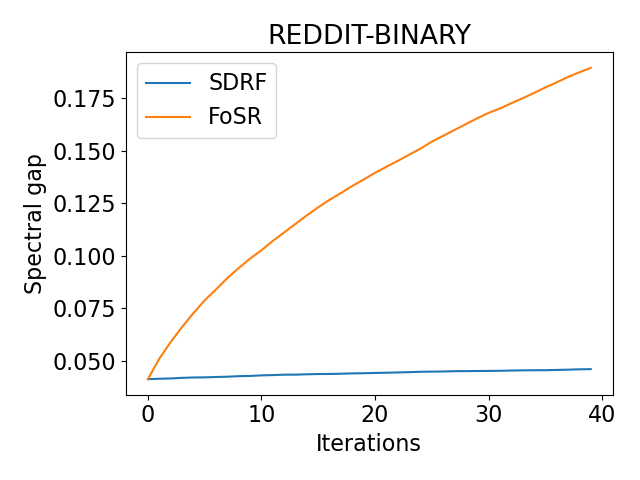}
    \includegraphics[scale=.28]{./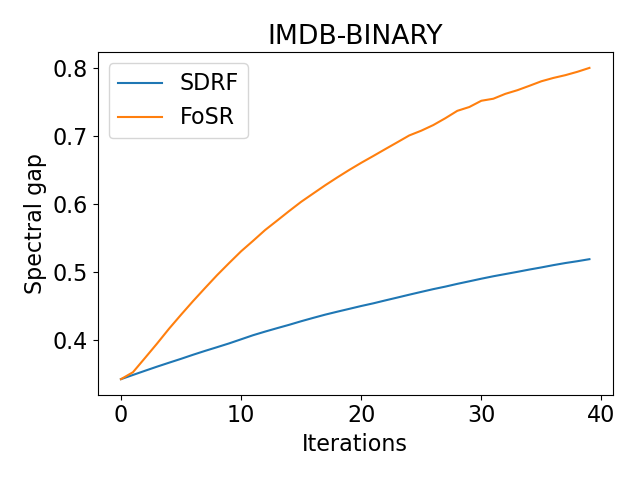}
    \includegraphics[scale=.28]{./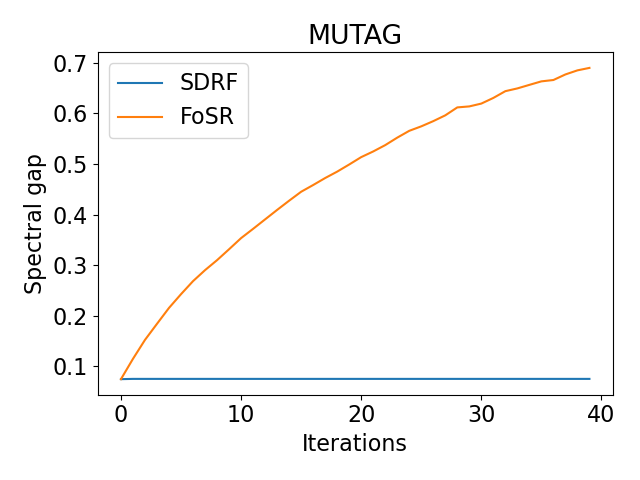}
    \includegraphics[scale=.28]{./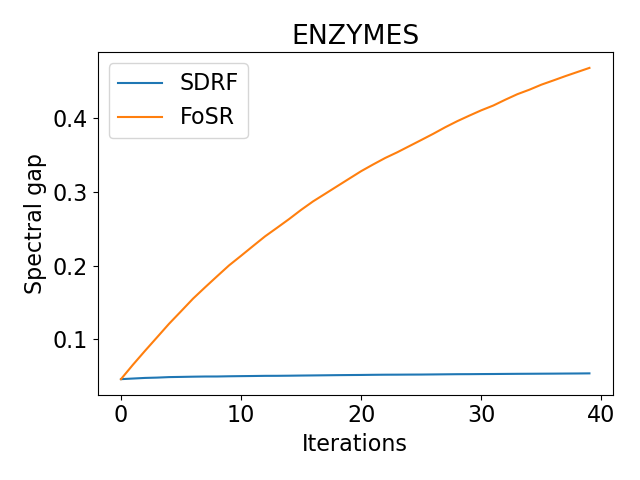}
    \includegraphics[scale=.28]{./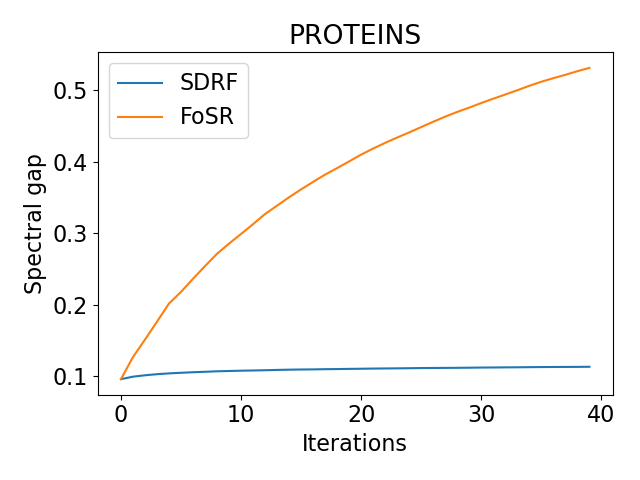}
    \includegraphics[scale=.28]{./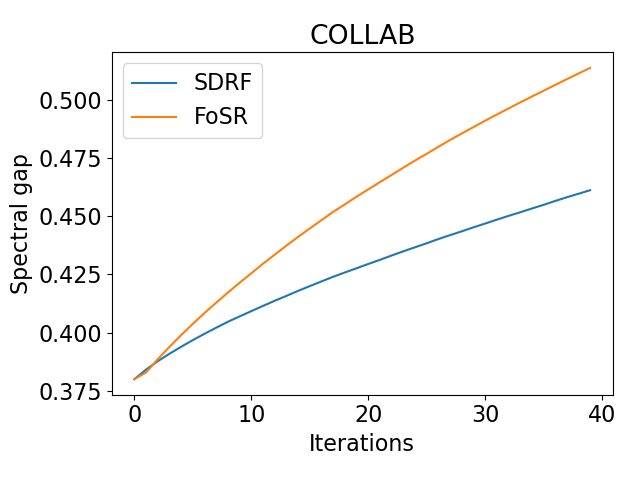}
    \caption{Normalized spectral gap as a function of the number of rewiring iterations for FoSR and SDRF on the TUDataset graphs \citep{morris2020TUDataset}. We record the average spectral gap across all graphs in the dataset for each iteration count.}
    \label{fig:spectral-exp-TUDataset}
\end{figure}

\subsection{Comparison of the Computation time for FoSR and SDRF} \label{app:compute-time}
Our First-order Spectral Rewiring (FoSR) method computes the first-order change in the spectral gap from adding each edge, and then adds the edge which maximizes this. Optimizing only the first-order change gives our method a significant computational advantage over competing curvature-based methods like the SDRF \citep{topping2022understanding}. 
Figure~\ref{fig:runtime-synthetic} and Table~\ref{tab:runtimeTUd} show the computation time of FoSR and SDRF for a synthetic dataset consisting of Erd\"os-R\'enyi graphs, and the TUDataset graphs \citep{morris2020TUDataset} respectively.
We observe that the run times for FoSR are almost an order of magnitude lower than that of SDRF.

\begin{figure}
    \centering
    \includegraphics[scale=.4]{./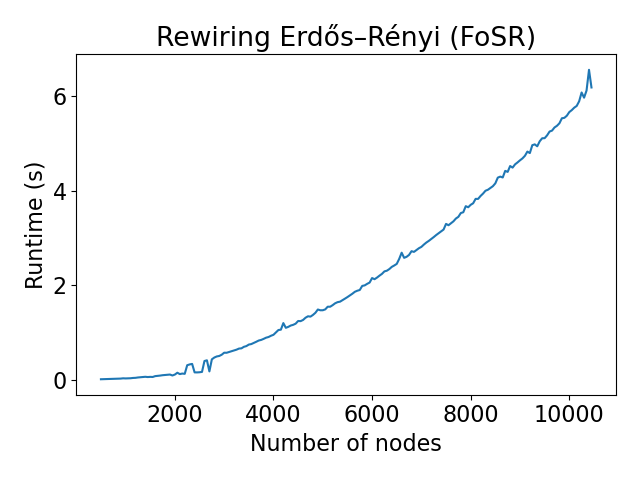}
    \includegraphics[scale=.4]{./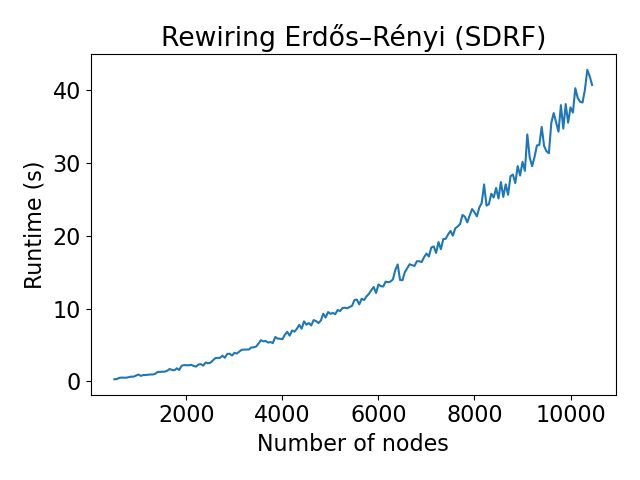}
    \caption{Compute time of FoSR and SDRF for Erd\"os-R\'enyi graphs. For a graph with $n$ nodes, we add an edge $(i, j)$ with probability $p = \frac{5\log n}{n}$.}
    \label{fig:runtime-synthetic}
\end{figure}
\begin{table}[H]
	\caption{Runtime to rewire every graph for 10 iterations (in seconds).}
	\label{tab:runtimeTUd}
	\begin{adjustwidth}{-2in}{-2in}
		\centering
		\scriptsize
		\bgroup
		\def\arraystretch{1.2}
		\setlength{\tabcolsep}{1.5pt}
		\begin{tabular}{lcccccc} 
        \multicolumn{7}{c}{\footnotesize}\\ 
		\toprule 
        Architecture & REDDIT-BINARY & IMDB-BINARY & MUTAG & ENZYMES & PROTEINS & COLLAB \\ \hline
        FoSR & 37.4167 & 0.0924058 & 0.0146499 & 0.0622702 & 0.1677542 & 4.58400 \\
        SDRF & 193.432 & 5.28793 & 0.407810 & 2.38515 & 5.51248 & 610.542 \\
        \bottomrule\\[-1em]
		
		\end{tabular}
		\egroup
		\end{adjustwidth} 
	\hspace{0.01pt}
	\linebreak[4]
\end{table}

\subsection{Trade-off between Oversquashing and Oversmoothing as a function of Rewiring iterations}  \label{app:tradeoff-ovsq-ovsm}
Table~\ref{tab:expresults-rgcn} shows that the relational architecture indeed helps to reduce oversmoothing, since the relational rewiring leads to higher Dirichlet energies. Here the Dirichlet energy is measured as an average over all graphs in the dataset after training, summed over 100 runs.

\begin{table}[h!]
	\caption{Dirichlet energy of the final layer for TUDataset graphs after training. The relational architectures tend to provide higher energy, which indicates that they are less prone to oversmoothing.}
	\label{tab:expresults-rgcn}
	\begin{adjustwidth}{-2in}{-2in}
			\centering
			\scriptsize %
			\bgroup
			\def\arraystretch{1.2}
			\setlength{\tabcolsep}{1.5pt}
			\begin{tabular}{lcccccc} 
        \multicolumn{7}{c}{\footnotesize GCN}\\ 
		\toprule 
        Rewiring & REDDIT-BINARY & IMDB-BINARY & MUTAG & ENZYMES & PROTEINS & COLLAB \\ \hline
         None & $0.6188 \pm 0.0676$ & $0.0018 \pm 0.0001$ & $0.0522 \pm 0.0046$ & $0.0954 \pm 0.0026$ & $0.0470 \pm 0.0008$ & $0.0026 \pm 0.0001$ \\ 
        DIGL & $0.0002 \pm 0.0000$ & $0.0005 \pm 0.0001$ & $0.0303 \pm 0.0032$ & $0.0392 \pm 0.0007$ & $0.0398 \pm 0.0010$ & $0.0001 \pm 0.0000$ \\ 
        SDRF & $0.6410 \pm 0.0584$ & $0.0013 \pm 0.0001$ & $0.0504 \pm 0.0047$ & $0.0927 \pm 0.0023$ & $0.0468 \pm 0.0007$ & $0.0025 \pm 0.0001$ \\ 
        FoSR & $0.6047 \pm 0.0497$ & $0.0020 \pm 0.0001$ & $0.0579 \pm 0.0024$ & $0.0844 \pm 0.0023$ & $0.0699 \pm 0.0013$ & $0.0030 \pm 0.0005$ \\ 
        \bottomrule\\[-1em]
		\multicolumn{7}{c}{\footnotesize R-GCN}\\	
		\toprule		
        Rewiring & REDDIT-BINARY & IMDB-BINARY & MUTAG & ENZYMES & PROTEINS & COLLAB  \\ \hline
        None & $0.8077 \pm 0.0007$ & $0.0607 \pm 0.0001$ & $0.0966 \pm 0.0040$ & $0.1214 \pm 0.0024$ & $0.1161 \pm 0.0043$ & $0.0904 \pm 0.0001$ \\ 
        DIGL & $0.0425 \pm 0.0000$ & $0.0309 \pm 0.0000$ & $0.0918 \pm 0.0046$ & $0.0894 \pm 0.0015$ & $0.0979 \pm 0.0045$ & $0.0041 \pm 0.0000$ \\ 
        SDRF & $1.8460 \pm 0.0164$ & $0.0613 \pm 0.0061$ & $0.1575 \pm 0.0101$ & $0.2411 \pm 0.0069$ & $0.1235 \pm 0.0047$ & $0.8306 \pm 0.0475$ \\ 
        FoSR & $1.3733 \pm 0.0203$ & $1.3040 \pm 0.0705$ & $0.5345 \pm 0.0201$ & $0.3206 \pm 0.0088$ & $0.4308 \pm 0.0337$ & $0.8139 \pm 0.0382$ \\ 
        \bottomrule\\[-1em]
        \multicolumn{7}{c}{\footnotesize GIN}\\ 
		\toprule
        Rewiring & REDDIT-BINARY & IMDB-BINARY & MUTAG & ENZYMES & PROTEINS & COLLAB \\ \hline
   
        None & $1.1215 \pm 0.0379$ & $0.0816 \pm 0.0093$ & $0.1287 \pm 0.0053$ & $0.1064 \pm 0.0018$ & $0.0853 \pm 0.0034$ & $0.1128 \pm 0.0074$ \\ 
        DIGL & $0.1080 \pm 0.0103$ & $0.0517 \pm 0.0061$ & $0.0442 \pm 0.0052$ & $0.0675 \pm 0.0011$ & $0.0679 \pm 0.0029$ & $0.0067 \pm 0.0011$ \\ 
        SDRF & $1.0996 \pm 0.0392$ & $0.0694 \pm 0.0092$ & $0.1126 \pm 0.0047$ & $0.1005 \pm 0.0015$ & $0.0783 \pm 0.0027$ & $0.1164 \pm 0.0075$ \\ 
        FoSR & $0.8914 \pm 0.0522$ & $0.0916 \pm 0.0148$ & $0.1314 \pm 0.0068$ & $0.1298 \pm 0.0027$ & $0.0781 \pm 0.0058$ & $0.1271 \pm 0.0098$ \\ 
        \bottomrule\\[-1em]
        \multicolumn{7}{c}{\footnotesize R-GIN}\\	
		\toprule
        Rewiring & REDDIT-BINARY & IMDB-BINARY & MUTAG & ENZYMES & PROTEINS & COLLAB \\ \hline
       
        None & $1.2972 \pm 0.0336$ & $0.2625 \pm 0.0235$ & $0.4140 \pm 0.0142$ & $0.1825 \pm 0.0039$ & $0.3021 \pm 0.0286$ & $0.3293 \pm 0.0244$ \\ 
        DIGL & $0.3265 \pm 0.0271$ & $0.0591 \pm 0.0075$ & $0.0772 \pm 0.0052$ & $0.1319 \pm 0.0021$ & $0.2534 \pm 0.0168$ & $0.0175 \pm 0.0016$ \\ 
        SDRF & $1.5525 \pm 0.0260$ & $0.1031 \pm 0.0051$ & $0.2754 \pm 0.0102$ & $0.1768 \pm 0.0027$ & $0.2373 \pm 0.0144$ & $0.2695 \pm 0.0153$ \\ 
        FoSR & $0.8037 \pm 0.0227$ & $0.1828 \pm 0.0125$ & $0.2901 \pm 0.0063$ & $0.1000 \pm 0.0010$ & $0.1593 \pm 0.0051$ & $0.2414 \pm 0.0129$ \\ 
        \bottomrule        
				\end{tabular}
			\egroup
		\end{adjustwidth} 
	\hspace{0.01pt}
	\linebreak[4]
\end{table}

{Relational GNNs evidently help to reduce oversmoothing. However, the tradeoff between oversquashing and oversmoothing still exists. Recall that input graphs with small spectral gaps are prone to oversquashing, whereas inputs with large spectral gaps are prone to oversmoothing (see Section~\ref{sec:rate-smoothing}). We use R-GNNs to alleviate oversmoothing, but there is still an optimal number of edges to add, beyond which performance declines. Figure~\ref{fig:tradeoff-ovsq-ovsm} shows that the performance for R-GCNs peaks at around 25 iterations, which is reflected in both the test accuracy and Dirichlet energy of the final layer.}

\begin{figure}[h!]
    \centering
    \includegraphics[scale=.4]{./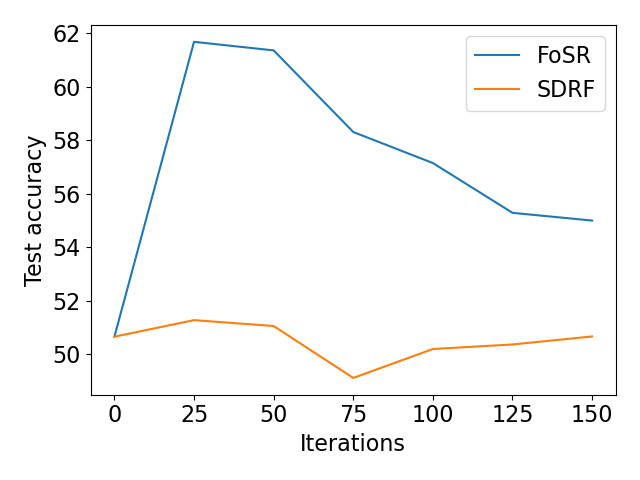}
    \includegraphics[scale=.4]{./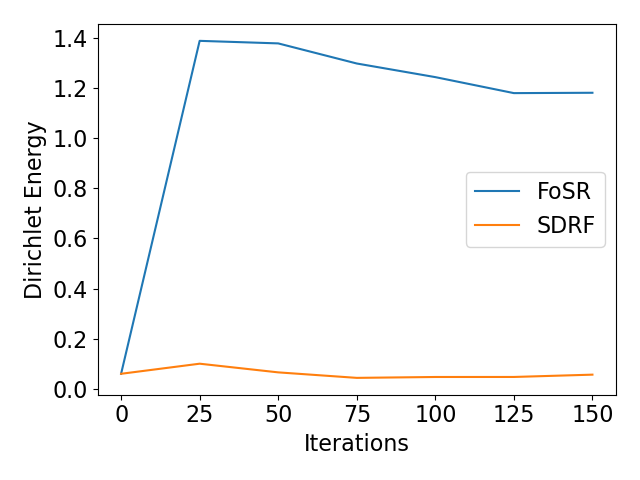}
    \caption{Trade-off between oversquashing and oversmoothing as a function of rewiring iterations for the R-GCN on the IMDB-BINARY graph classification task. Left: Test accuracy as a function of the rewiring iterations. Right: Dirichlet energy of the final layer as a function of rewiring iterations. The performance for R-GCNs peaks at around 25 iterations, which is reflected in both the test accuracy and Dirichlet energy.}
    \label{fig:tradeoff-ovsq-ovsm}
\end{figure}

\subsection{Discussion on the approximation error of FoSR} \label{app:approx-error-FoSR}
When rewiring a graph using FoSR, the objective at each iteration is to add the edge which maximally increases the spectral gap. Since FoSR makes this calculation using a first-order approximation of the spectral gap, there is some error incurred between the estimated spectral gap and the actual spectral gap. This error is caused by two factors. The first is the error from using the first-order change of the spectral gap instead of the actual change. The second error accrues from using the dominant term of the first-order change instead of the first-order change. We record each of these three quantities (actual change, first-order change, FoSR approximated change) for graphs in the ENZYMES dataset \citep{morris2020TUDataset} in Figure~\ref{fig:FoSR-approx-error}. We see that the actual change is well-correlated with the approximations, and that the approximations are well-correlated with each other.

While FoSR accumulates some error from the actual change in the spectral gap, we are primarily interested in how this affects its ability to choose good edges for rewiring. Hence, it is natural to ask how FoSR compares to a greedy algorithm which computes the spectral gap exactly. We test the performance of FoSR on a synthetic dumbbell graph, consisting of two cliques of size 50, connected via a path of length 3. The results are shown in Figure~\ref{fig:FoSR-approx-error-2}. 
We see that the spectral evolution of FoSR is indistinguishable from that of the greedy method, indicating that the approximation is strong and that the two methods choose similar edges.

\begin{figure}
    \centering
\includegraphics[scale=.28]{./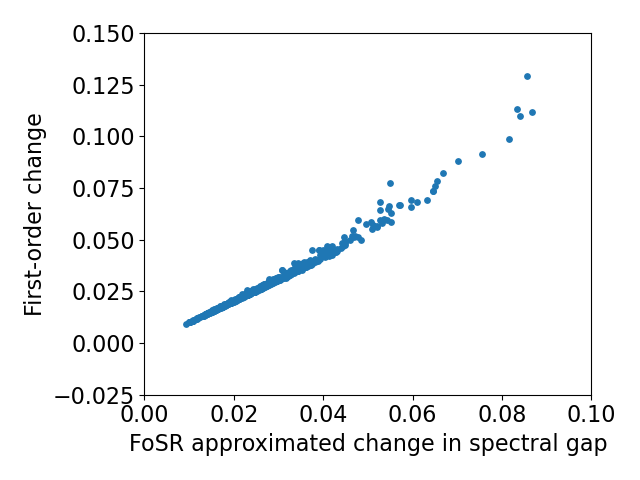}
\includegraphics[scale=.28]{./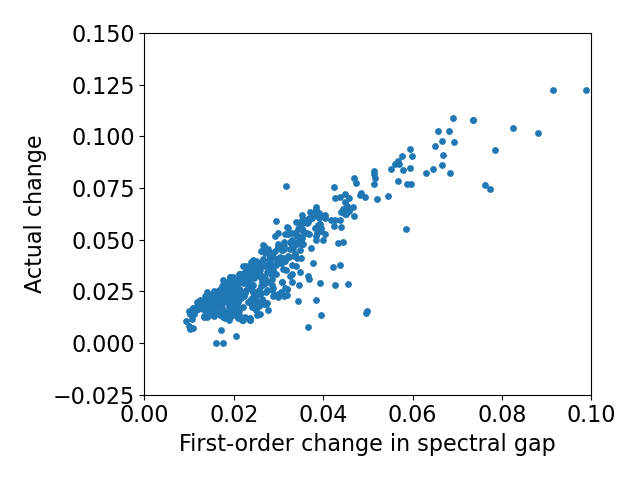}
\includegraphics[scale=.28]{./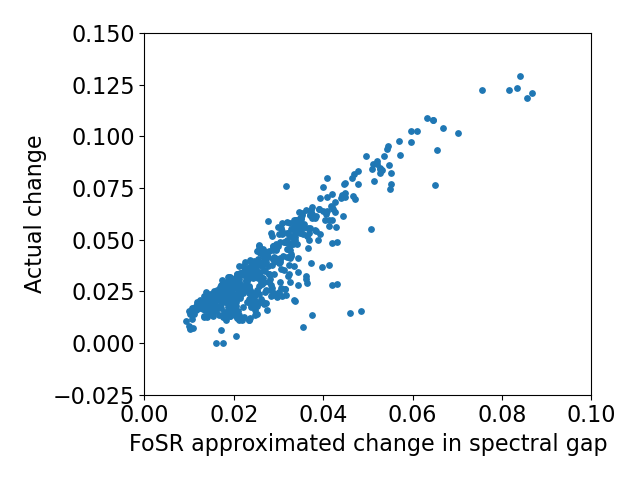}
    \caption{Change in the spectral gap, first-order change, and the FoSR approximated change for rewired graphs. The changes are recorded for each graph in the ENZYMES dataset after 1 rewiring iteration.}
    \label{fig:FoSR-approx-error}
\end{figure}
\begin{figure}
    \centering
\includegraphics[scale=.4]{./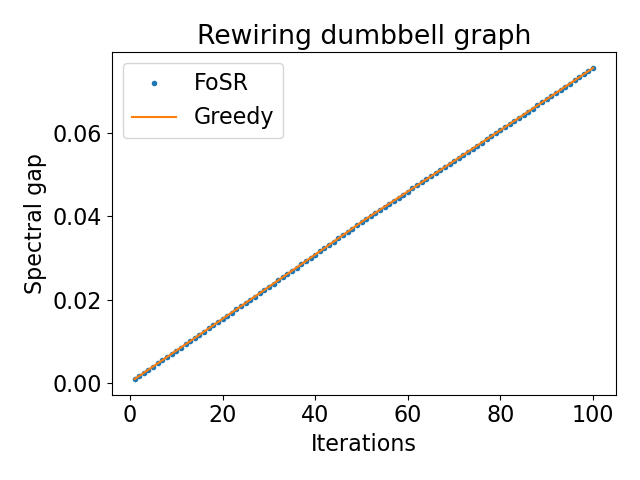}
    \caption{Evolution of the spectral gap of a dumbbell graph under rewiring, evaluated at every iteration count between 0 and 100. The performance of FoSR is nearly identical to a method which selects edges to add via exact computation of eigenvalues.}
    \label{fig:FoSR-approx-error-2}
\end{figure}

\FloatBarrier

\section{Details on Section~\ref{sec:1}}
\label{app:details-sec3} 

\begin{proof}[Proof of Theorem~\ref{first-order-lambda}] 
We refer to \citet[Theorem 2.3]{stewart1990matrix} for a proof that $\lambda_i$ is a continuously differentiable function and a computation of its gradient for general classes of matrices. We provide a calculation for our case here. 

Fix a symmetric matrix $M_0 \in \Rb^{n \times n}$ and let $M_0 = U \Sigma U^{-1}$ be an orthonormal diagonalization of $M_0$ with the diagonal entries of $\Sigma$ sorted in descending order. 
Fix $j, k \in [n]$. Let $E_{j, k}$ denote the matrix whose $(j, k)$ entry is 1 and remaining entries are 0. 
Define a curve $\gamma: \Rb \to \Rb^{n \times n}$ by 
\[
\gamma(t) := M_0 + tUE_{j, k}U^{-1}. 
\] 
Then
\begin{align}
\label{eigenvalue-curve}
\lambda_{i}%
(\gamma(t)) &= \lambda_{i}%
(U(\Sigma + tE_{j, k})U^{-1})= \lambda_{i}%
(\Sigma + tE_{j, k})= \lambda_{i}%
+ \delta_{i, j}\delta_{i, k}t, 
\end{align}
where $\delta_{i, j}$ is equal to 1 if $i = j$ and 0 otherwise. The final equality follows since the $i$-th %
diagonal entry of $\Sigma$ is $\lambda_i$, and adding an off-diagonal entry does not change its eigenvalues. For sufficiently small $t$ the order of the eigenvalues does not change.
By the chain rule, 
\begin{align*}
\frac{d}{dt}\Big|_{t = 0} \lambda_{i}%
(\gamma(t)) &= (d\lambda_i)_{M_0}(\dot{\gamma}(0)).
\end{align*}
By equation (\ref{eigenvalue-curve}), the left-hand side of the above equation is equal to $\delta_{i, j}\delta_{i,k}$. So
\begin{align*}
    \delta_{i, j} = (d\lambda_i)_{M_0}(\dot{\gamma}(0))
    = (d\lambda_{i}%
    )_{M_0}(UE_{j, k}U^{-1}).
\end{align*}
By the definition of the gradient,
\begin{align*}
    \tr(U E_{j, k} U^{-1}\nabla \lambda_{i}%
    (M_0)) = (d{\lambda_i})_{M_0}(UE_{j, k}U^{-1})
    = \delta_{i, j}\delta_{i, k}
    = \tr(E_{i, i}E_{j, k}).
\end{align*}
Rewriting the left-hand side of the above equation gives us
\[\tr(U^{-1}\nabla \lambda_{i}%
(M_0)U E_{j, k}) = \tr(E_{i, i}E_{j, k}). \]
Since this holds for all $j, k \in [n]$ and the $E_{j, k}$ span $\Rb^{n \times n}$, this implies that
\begin{align*}
    U^{-1}\nabla \lambda_{i}%
    (M_0)U %
    &= E_{i, i}\\
    \nabla \lambda_{i}%
    (M_0) &= UE_{i, i}U^{-1}\\
    \nabla \lambda_{i}%
    (M_0) &= x_ix_i^T, 
\end{align*}
as desired.
\end{proof}

\begin{proof}[Proof of Proposition~\ref{proposition:addededge}]
Let $A$ be the adjacency matrix of a graph $G$, and let $A_N$ be the normalized adjacency matrix. Let $x$ be an eigenvector of $A_N$ with eigenvalue $\lambda$. Suppose that we add an edge $(u, v)$ to $G$. Let $\delta A_N$ denote the entry-wise change in the normalized adjacency matrix from adding the edge $(u, v)$. If $i \neq u$ and $j \neq v$, then
\begin{align*}
(\delta A_N)_{ij} &= 0.
\end{align*}
If $i \neq u$ and $j = v$, then
\begin{align*}
(\delta A_N)_{ij} &= \frac{A_{ij}}{\sqrt{d_i}}\left(\frac{1}{\sqrt{1 + d_j}} - \frac{1}{\sqrt{d_j}}\right).
\end{align*}
If $i = u$ and $j \neq v$, then
\begin{align*}
(\delta A_N)_{ij} &= \frac{A_{ij}}{\sqrt{d_j}}\left(\frac{1}{\sqrt{1 + d_i}} - \frac{1}{\sqrt{d_i}}\right). 
\end{align*}
If $i = u$ and $j = v$, then
\begin{align*}
(\delta A_N)_{ij} &= \frac{1}{(1 + \sqrt{d_i})(1 + \sqrt{d_j})}.
\end{align*}
By Theorem~\ref{first-order-lambda}, 
the first-order change in $\lambda_2(A_N)$ is given by
\begin{align*}
x^T(\delta A_N)x =& \sum_{i = u}\sum_{j = v}(\delta A_N)_{ij}x_ix_j + \sum_{i=u}\sum_{j \neq v}(\delta A_N)_{ij}x_ix_j + \sum_{i = v}\sum_{j = u}(\delta A_N)_{ij}x_ix_j + \sum_{i = v}\sum_{j \neq u}(\delta A_N)_{ij}x_ix_j\\
&+ \sum_{i \neq u, v}\sum_{j = u}(\delta A_N)_{ij}x_ix_j + \sum_{i \neq u, v}\sum_{j = v}(\delta A_N)_{ij}x_ix_j + \sum_{i \neq u, v}\sum_{j \neq u, v}(\delta A_N)_{ij}x_ix_j\\
=& \frac{x_ux_v}{(\sqrt{1 + d_u})(\sqrt{1 + d_v})} + \sum_{j \neq v}\frac{A_{uj}x_ux_j}{\sqrt{d_j}}\left(\frac{1}{\sqrt{1 + d_u}} - \frac{1}{\sqrt{d_u}}\right)\\&+ \frac{x_ux_v}{(\sqrt{1 + d_u})(\sqrt{1 + d_v})}
+ \sum_{j \neq u}\frac{A_{vj}x_vx_j}{\sqrt{d_j}}\left(\frac{1}{\sqrt{1 + d_v}} - \frac{1}{\sqrt{d_v}}\right) \\&+ \sum_{i \neq u, v}\frac{A_{iu}x_ix_u}{\sqrt{d_i}}\left(\frac{1}{\sqrt{1 + d_u}} - \frac{1}{\sqrt{d_u}}\right) + \sum_{i \neq u, v}\frac{A_{iv}x_ix_v}{\sqrt{d_i}}\left(\frac{1}{\sqrt{1 + d_v}} - \frac{1}{\sqrt{d_v}}\right)\\
=& \frac{2x_ux_v}{(\sqrt{1 + d_u})(\sqrt{1 + d_v})} + 2\sum_{i \neq u}\frac{A_{iv}x_ix_v}{\sqrt{d_i}}\left(\frac{1}{\sqrt{1 + d_v}} - \frac{1}{\sqrt{d_v}}\right) \\&+ 2\sum_{i \neq v}\frac{A_{iu}x_ix_u}{\sqrt{d_i}}\left(\frac{1}{\sqrt{1 + d_u}} - \frac{1}{\sqrt{d_u}}\right)\\
=& \frac{2x_ux_v}{(\sqrt{1 + d_u})(\sqrt{1 + d_v})} + 2\sum_{i \neq u}(A_N)_{iv}x_ix_v\left(\frac{\sqrt{d_v}}{\sqrt{1 + d_v}} - 1\right) \\&+ 2\sum_{i \neq v}(A_N)_{iu}x_ix_u\left(\frac{\sqrt{d_u}}{\sqrt{1 + d_u}} - 1\right)\\
=& \frac{2x_ux_v}{(\sqrt{1 + d_u})(\sqrt{1 + d_v})} + 2\sum_{i}(A_N)_{iv}x_ix_v\left(\frac{\sqrt{d_v}}{\sqrt{1 + d_v}} - 1\right)\\&+ 2\sum_{i}(A_N)_{iu}x_ix_u\left(\frac{\sqrt{d_u}}{\sqrt{1 + d_u}} - 1\right)
\end{align*}
Since $x$ is an eigenvector of $A_N$ with eigenvalue $\lambda$, we may write the above expression as
\begin{align*}
&\frac{2x_ux_v}{(\sqrt{1 + d_u})(\sqrt{1 + d_v})} + 2 \lambda x_v^2\left(\frac{\sqrt{d_v}}{\sqrt{1 + d_v}} - 1\right) + 2 \lambda x_u^2\left(\frac{\sqrt{d_u}}{\sqrt{1 + d_u}} - 1\right).
\end{align*}
Applying this equality to the second eigenvector of $A_N$ yields the result.
\end{proof}

\section{Details on the experiments from Section~\ref{sec:experiments}}

The computer implementation of the proposed methods along with scripts to re-run our experiments are made publicly available on \url{https://anonymous.4open.science/r/FoSR-0CE3/}. 
In the following we list details on hyperparameters, libraries, and the compute infrastructure we used in these experiments. 

\subsection{Hyperparameters}
\label{sec:experiments-hyperparameters}
\begin{table}[H]
	\caption{Common hyperparameters} 
	\label{tab:commonHyperparam}
	\centering
	\small
	\begin{tabular}{lc}
    \toprule
		Dropout & 0.5\\
		Number of layers & 4\\
		Hidden dimension & 64\\
		Learning rate & $10^{-3}$\\
		Stopping patience \phantom{xxxxx} & 100 epochs\\
	\bottomrule
	\end{tabular}
\end{table}
\begin{table}[H]
	\caption{Rewiring iteration counts for FoSR and SDRF.} 
	\label{tab:itercount}
	\begin{adjustwidth}{-2in}{-2in}
		\centering
		\scriptsize
		\bgroup
		\def\arraystretch{1.2}
		\setlength{\tabcolsep}{1.5pt}
		\begin{tabular}{lcccccc} 
        \multicolumn{7}{c}{\footnotesize FoSR}\\ 
		\toprule 
        Architecture & REDDIT-BINARY & IMDB-BINARY & MUTAG & ENZYMES & PROTEINS & COLLAB \\ \hline
        GCN & 5 & 5 & 40 & 10 & 20 & 10 \\
        R-GCN & 5 & 20 & 40 & 40 & 5 & 5 \\
        GIN & 10 & 20 & 20 & 5 & 10 & 20 \\  
        R-GIN & 40 & 20 & 5 & 40 & 20 & 10 \\ 
        \bottomrule\\[-1em]
		\multicolumn{7}{c}{\footnotesize SDRF}\\	
		\toprule		
        Architecture & REDDIT-BINARY & IMDB-BINARY & MUTAG & ENZYMES & PROTEINS & COLLAB \\ \hline
        GCN & 5 & 20 & 5 & 5 & 40 & 5 \\
        R-GCN & 40 & 5 & 40 & 5 & 20 & 20 \\
        GIN & 5 & 10 & 5 & 5 & 20 & 40 \\  
        R-GIN & 5 & 40 & 5 & 5 & 5 & 20 \\ 
        \bottomrule\\[-1em]
		\end{tabular}
		\egroup
		\end{adjustwidth} 
	\hspace{0.01pt}
	\linebreak[4]
\end{table}
\vspace{-1cm}
\begin{table}[H]
	\caption{DIGL hyperparameters.} 
	\label{tab:DIGL}
	\begin{adjustwidth}{-2in}{-2in}
		\centering
		\scriptsize
		\bgroup
		\def\arraystretch{1.2}
		\setlength{\tabcolsep}{1.5pt}
		\begin{tabular}{lcccccc} 
			\multicolumn{7}{c}{\footnotesize Teleport probability ($\alpha$)}\\ 
			\toprule 
			Architecture & REDDIT-BINARY & IMDB-BINARY & MUTAG & ENZYMES & PROTEINS & COLLAB \\ \hline
			GCN & $0.15$ & $0.05$ & $0.15$ & $0.15$ & $0.15$ & $0.05$ \\
			R-GCN & $0.15$ & $0.05$ & $0.05$ & $0.15$ & $0.05$ & $0.05$ \\
			GIN & $0.05$ & $0.15$ & $0.05$ & $0.15$ & $0.15$ & $0.15$ \\  
			R-GIN & $0.05$ & $0.15$ & $0.05$ & $0.05$ & $0.05$ & $0.05$ \\ 
			\bottomrule\\[-1em]
			\multicolumn{7}{c}{\footnotesize Sparsification threshold ($\epsilon$)}\\	
			\toprule		
			Architecture & REDDIT-BINARY & IMDB-BINARY & MUTAG & ENZYMES & PROTEINS & COLLAB \\ \hline
			GCN & $10^{-3}$ & $10^{-4}$ & $10^{-3}$ & $10^{-4}$ & $10^{-4}$ & $10^{-4}$ \\
			R-GCN & $10^{-4}$ & $10^{-3}$ & $10^{-4}$ & $10^{-3}$ & $10^{-3}$ & $10^{-3}$ \\
			GIN & $10^{-3}$ & $10^{-4}$ & $10^{-4}$ & $10^{-3}$ & $10^{-3}$ & $10^{-4}$ \\  
			R-GIN & $10^{-3}$ & $10^{-4}$ & $10^{-3}$ & $10^{-4}$ & $10^{-4}$ & $10^{-3}$ \\ 
			\bottomrule\\[-1em]
		\end{tabular}
		\egroup
	\end{adjustwidth} 
	\hspace{0.01pt}
	\linebreak[4]
\end{table}

\subsection{Computer infrastructure and libraries} 
In this section, we provide details of our implementation. All the experiments were implemented in Python using PyTorch \citep{paszke2019pytorch}, NumPy \citep{harris2020array}, PyG (PyTorch Geometric) \citep{PyTGeom}, with plots created using Matplotlib \citep{hunter2007matplotlib}.
PyTorch, PyG and NumPy are made available under the BSD license, and Matplotlib under the PSF license. 

We conducted all our experiments on a local server with 2x 22-Core/44-Thread Intel Xeon Scalable Gold 6152 processor (2.1/3.7 GHz) and 8x NVIDIA GeForce RTX 2080 Ti graphics card (11 GB,  GDDR6).

\end{document}